%% file: archive_ver1.tex
\documentclass{article}

\usepackage[main, final]{neurips_2026}

\usepackage[utf8]{inputenc} 
\usepackage[T1]{fontenc}    
\usepackage{hyperref}       
\usepackage{url}            
\usepackage{booktabs}       
\usepackage{amsfonts}       
\usepackage{nicefrac}       
\usepackage{microtype}      
\usepackage{xcolor}         

\usepackage{algorithm}
\usepackage{algorithmic}

\usepackage{setspace}
\usepackage{pifont}
\usepackage{amsmath}
\usepackage{amssymb}
\usepackage{mathtools}
\usepackage{amsthm}
\usepackage{thmtools}
\usepackage{thm-restate}

\usepackage{graphicx}
\usepackage[table]{xcolor}
\usepackage{tikz}

\usepackage{multirow}
\usepackage{adjustbox}
\usepackage{wrapfig}
\usepackage{caption} 
\usepackage{subcaption}

\makeatletter
\newcommand\footnoteref[1]{\protected@xdef\@thefnmark{\ref{#1}}\@footnotemark}
\makeatother

\usepackage[capitalize]{cleveref}
 \AtBeginDocument{%
    \crefname{figure}{Figure}{Figures}%
}
\newcommand{\eqlabelleft}{(}
\newcommand{\eqlabelright}{)}

\creflabelformat{equation}{\eqlabelleft#2#1#3\eqlabelright}

\theoremstyle{plain}

\theoremstyle{definition}

\theoremstyle{remark}

\input{math_commands.tex}

\title{Best-of-$N$ Guidance for Test-time Diffusion Alignment}

\author{
Richard Lee Kim\textsuperscript{\textmd{1}} \quad Yeongmin Kim\textsuperscript{\textmd{1}} \quad Gyuwon Sim\textsuperscript{\textmd{1}} \quad Taekyu Kim\textsuperscript{\textmd{1}} \\
\bfseries Minsang Park\textsuperscript{\textmd{1}} \quad Il-Chul Moon\textsuperscript{\textmd{1,2}} \\[2pt]
\textsuperscript{\textmd{1}}KAIST, \textsuperscript{\textmd{2}}summary.ai\\
\texttt{\{swimskykim,alsdudrla10,gkwlaks4886,taekyu,pagemu,icmoon\}@kaist.ac.kr}\\
}
\begin{document}

\maketitle

\begin{abstract}
Diffusion models achieve strong generative performance but often struggle to align generated samples with human preferences measured by a reward model. A simple yet effective algorithm for test-time alignment is Best-of-$N$ (BoN) sampling, which draws $N$ i.i.d. samples from a pre-trained diffusion model and outputs the single highest-reward sample. Despite its empirical success, BoN makes limited use of reward information, as it is incorporated only at the final selection stage without influencing the reverse diffusion trajectory during sampling. Consequently, BoN sampling does not improve the average alignment of generated samples and is primarily suited to single-output settings. We propose Best-of-$N$ Guidance (BoNG), a novel method that integrates the principle of BoN sampling directly into the reverse diffusion process. BoNG performs online BoN selection over denoising particles and adjusts the reverse diffusion process to steer the particle population toward higher-reward regions during generation. Specifically, by introducing an asymmetric guidance interaction among denoising particles, BoNG uses the current BoN particle as a guidance signal to the rest of the particle population. This particle-level interaction reshapes the sampling process toward higher-reward regions, enabling BoNG to improve not only the final best sample beyond Vanilla BoN sampling, but also the average quality of generated samples. Over 36 empirical comparisons, BoNG achieves the best performance in 29 cases, ranking first in 80.56\% of the comparisons against SMC and Vanilla BoN sampling. BoNG also supports multi-output capability, achieving 1.3$\times$ ImageReward score of the latest sample-based guidance method with a 1.6$\times$ speedup. We release the code at \url{https://github.com/aailab-kaist/BoNG}.

\end{abstract}

\section{Introduction}
Diffusion models~\cite{sohl2015deep,ho2020denoising,song2021scorebased} demonstrate strong generative capabilities, leading to a growing number of real-world applications that produce outputs conditioned on user inputs~\cite{microsoft_bing,grok_imagine}. However, the generated outputs still do not fully align with user preferences in terms of input alignment and sample fidelity. To address this issue, reward functions~\cite{xu2023imagereward, wu2023human, ma2025hpsv3widespectrumhumanpreference} that evaluate alignment with user conditions and output quality have been incorporated into the generation process. Recently, test-time alignment methods have shown superior performance compared to parameter-scaling~\cite{podell2024sdxl,flux2024} or fine-tuning approaches~\cite{black2024training,wallace2024diffusion}, attracting increasing research interest in this direction~\cite{snell2024scaling,singhal2025a,kim2026lookahead}.

Best-of-N (BoN) sampling and Sequential Monte Carlo (SMC)~\cite{singhal2025a,li2025derivativefree,skreta2025feynmankac} are widely used test-time alignment methods for diffusion models. However, both approaches have inherent limitations: BoN selects only a single sample, while SMC is prone to particle collapse. As a result, they are primarily suited to scenarios where only one sample is ultimately served. In contrast, guidance-based methods enable the generation of diverse samples by independently adjusting the denoising direction for each sample, making them applicable to settings that require multiple outputs. However, early approaches, such as gradient guidance~\cite{kim2023refining,bansal2024universal,na2025diffusion}, require neural backward passes with differentiable rewards, leading to substantial computational overhead and are prone to reward hacking.

Sample-based guidance is emerging as a promising direction, as it can generate diverse samples without requiring neural backward passes. However, it often becomes a computational bottleneck, since iterative rollout sampling requires generating final samples at each time step. Consequently, this line of research has focused on reducing the cost of rollout~\cite{potaptchik2025tilt,holderrieth2026glass,potaptchik2026meta,kim2026lookahead}, and has recently achieved promising results in both performance and efficiency~\cite{kim2026lookahead}. However, it still incurs rollout cost, resulting in 2–3× higher sampling cost compared to vanilla sampling.

We propose \textit{Best-of-$N$ Guidance} (BoNG), a rollout-free, sample-based guidance method inspired by BoN sampling. Unlike Vanilla BoN which uses the reward only once to select the best sample after generation, BoNG performs this selection online and injects it into the reverse diffusion process as a guidance signal (\cref{Fig1}). Specifically, we replace the iterative rollout process by selecting the highest-reward expected denoised sample among multiple particles at several intermediate timesteps. This approach steers all samples toward the direction of the highest-reward sample without incurring additional feed-forward computations of the score network. We theoretically analyze its connection to the reward-tilted distribution targeted by prior work, and empirically demonstrate gains over Vanilla BoN, SMC, and prior guidance methods with minimal additional inference cost.

\begin{figure}[t]
\centering
\includegraphics[width=\linewidth]{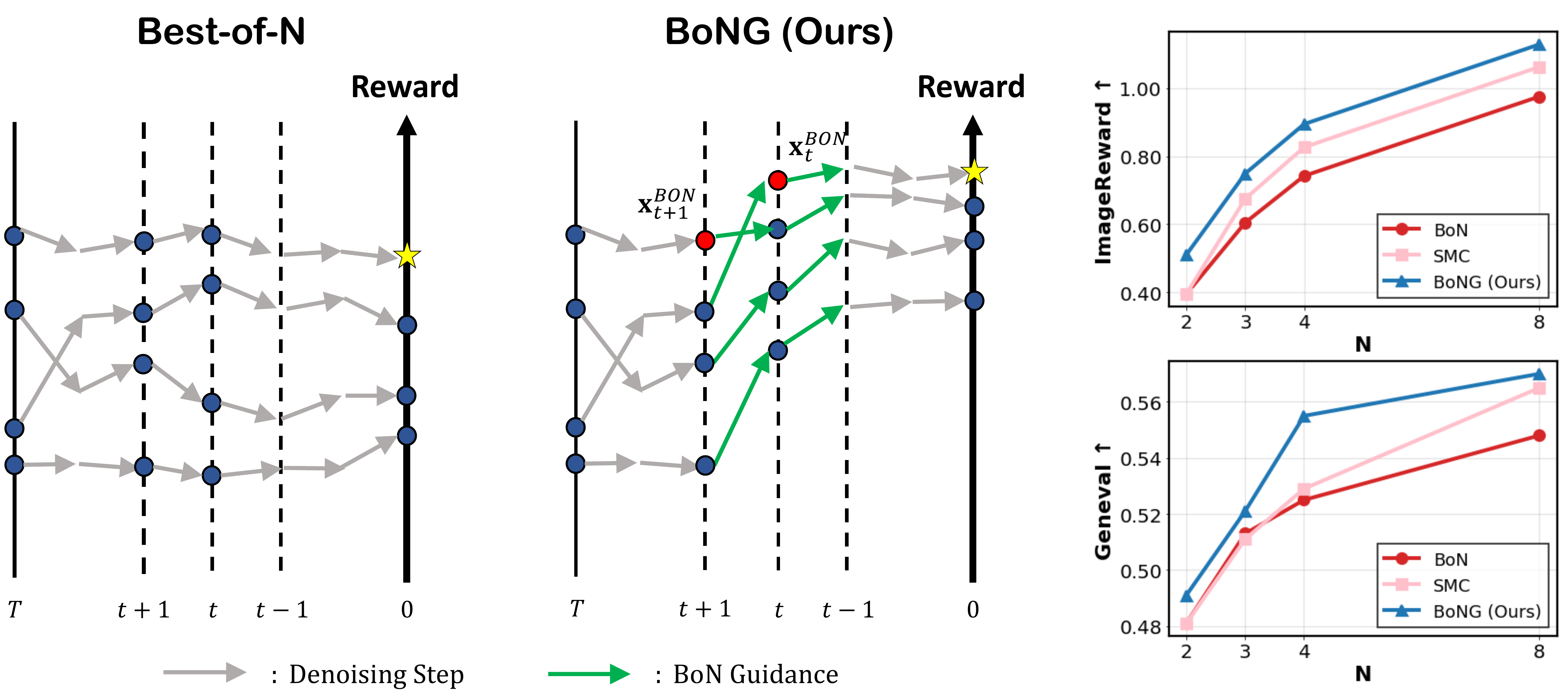}
\small
\caption{
\textbf{Best-of-$N$ Guidance (BoNG)}
(Left) Vanilla BoN sampling selects the highest-reward sample only after generation. 
(Middle) BoNG performs online BoN selection (\cref{alg:bon-select}) and uses the selected sample to guide the remaining particles (\cref{alg:bong-update}). 
(Right) BoNG consistently improves single-output performance over Vanilla BoN and SMC across various sample budgets.}

\label{Fig1}
\end{figure}

\section{Related Work}
\label{Sec:2}
BoN sampling and approaches based on reward-tilted distributions have been widely studied in test-time diffusion alignment. Given a pre-trained diffusion model $p_{\theta}(\rvx_0)$ and a human reward function $r(\rvx_0)$, the reward-tilted distribution is defined as $p^r_{\theta}(\rvx_0)\propto p_{\theta}(\rvx_0)\exp(r(\rvx_0))$. To approximate this distribution, previous work has studied how to estimate the intractable expected future reward $r_t(\rvx_t):=\mathbb{E}_{p_{\theta}(\rvx_0|\rvx_t)} \left[\exp(r(\rvx_0))\right]$ for a noisy sample $\rvx_t$, which induces the intermediate distribution $p_{\theta}^r(\rvx_t) \propto p_{\theta}(\rvx_t)r_t(\rvx_t)$~\cite{uehara2024fine,domingo-enrich2025adjoint}. SMC methods~\cite{singhal2025a,li2025derivativefree,skreta2025feynmankac} generate multiple samples from the base diffusion model and perform resampling based on intermediate reward values. However, SMC suffers from particle collapse, which degrades the quality of the best sample in single-output generation and limits sample diversity in multi-output settings.
Gradient-guidance methods address this issue by enabling i.i.d. sampling by approximating the target time-dependent score function. However, these methods require either training diffusion time-conditioned reward models~\cite{kim2023refining} or performing multiple neural backward passes with Taylor approximation~\cite{bansal2024universal,na2025diffusion} at every denoising step, resulting in significant latency and memory overhead.

Sample-based guidance methods~\cite{potaptchik2025tilt,kim2026lookahead} have recently emerged as a promising direction because they avoid particle collapse and do not require neural backward propagation for guidance. Tilt~\cite{potaptchik2025tilt} proposes a gradient-free method for incorporating expected future rewards into the reverse process. However, it remains computationally expensive because it requires generating multiple samples from $\rvx_t$ to $\rvx_0$ at every timestep $t$, known as backward rollout. GlassFlow~\cite{holderrieth2026glass} improves efficiency by reducing the number of rollout steps through ODE-based sampling, and MFM~\cite{potaptchik2026meta} further accelerates this process by replacing the ODE solver with a distilled model. LiDAR~\cite{kim2026lookahead} proposes a forward rollout strategy for estimating intermediate rewards. This reduces per-step rollout sampling to a single marginal rollout. LiDAR significantly improves sample-based guidance and achieves state-of-the-art performance while being around 10× faster than gradient-guidance methods. However, it still requires additional lookahead sampling, making it 2–3× slower than vanilla diffusion sampling. The goal of this work is to develop an efficient and effective sample-based guidance method inspired by the idea of BoN sampling.

\section{Preliminary}
\subsection{Diffusion Models}
A diffusion model~\cite{sohl2015deep,ho2020denoising,song2021scorebased} consists of a pair of stochastic processes: a forward process and a reverse process. The forward diffusion process injects noise into the data $\rvx_0 \sim p(\rvx_0)$ to produce a perturbed sample $\rvx_t$ via a Gaussian kernel:
\[
p(\rvx_t|\rvx_0)=\mathcal{N}(\rvx_t;\rvx_0,\sigma_t^2\mathbf{I}), 
\]
for every $t \in [0, T]$, with a corresponding gradually increasing noise scale $\sigma_t$. This perturbation produces the marginal distribution $p(\rvx_t)$, and the terminal noise scale $\sigma_T$ is chosen to be sufficiently large so that $p(\rvx_T)$ is close to a pure Gaussian prior.

The reverse process generates a data $\rvx_0$ from a noise $\rvx_T \sim p(\rvx_T)$, and is often represented as a stochastic differential equation~\cite{anderson1982reverse,song2021scorebased}:
\begin{align}
d\rvx_t = \left[\mathbf{f}(\rvx_t, t) - g^2(t)\nabla_{\rvx_t} \log p(\rvx_t)\right] dt + g(t) d\textbf{w}_t, \label{eq:reverse_diff}
\end{align}
where the drift $\mathbf{f}$ and volatility $g$ terms are defined to correspond to the noise schedule $\sigma_t$, and $\textbf{w}_t$ is a standard Wiener process. The diffusion model constructs the reverse process by estimating the score function $\nabla_{\rvx_t} \log p(\rvx_t)\approx\nabla_{\rvx_t} \log p_{\theta}(\rvx_t)$ using a score model trained with denoising score matching~\cite{vincent2011connection,song2019generative} or noise prediction~\cite{ho2020denoising} loss functions.

\subsection{Test-time Human Alignment}

\textbf{Problem Setup:} With the advancement of diffusion models, many pre-trained models are now available as open-source~\cite{rombach2022high,podell2024sdxl,flux2024}, enabling generation conditioned on user input $\rvc$. However, the final generated sample $\rvx_0$ often does not align well with the user input $\rvc$. Accordingly, we assume access to a pre-trained diffusion model $p_{\theta}(\rvx_0|\rvc)$ and a human preference reward function $ r:\mathcal{X}_0 \times \mathcal{C} \rightarrow \mathbb{R}$, and aim to obtain samples $\rvx_0$ such that reward $r(\rvx_0, \rvc)$ is high, by manipulating the inference process. Here, $\mathcal{X}_0$ denotes the data space at time $t=0$, and $\mathcal{C}$ denotes the user's input condition space.  

\subsubsection{Best-of-$N$ Sampling}
\label{Section: Best of N Sampling}

Best-of-$N$ (BoN) sampling is a post-hoc selection procedure that draws \(N\) i.i.d. samples from the base distribution \(p_{\theta}(\rvx_0 \mid \rvc)\) and returns a single sample with the highest reward. 
In finite \(N\), the BoN sample from the base distribution is defined as:
\label{Definition of BoN}
\begin{align}
{\rvx}_{0}^{\mathrm{BoN}}
:=
\underset{\{\rvx_0^i\}_{i=1}^N}{\operatorname{argmax}} \;\; r({\rvx}_0^i,\rvc),
\qquad
{\rvx}_0^i \overset{\text{i.i.d.}}{\sim} p_{\theta}(\rvx_0 \mid \rvc)
\label{eq:bon}
\end{align}
BoN sampling is a remarkably strong inference-time alignment strategy in practice \cite{ichihara2025evaluationbestofnsamplingstrategies}, since it can substantially improve the quality of the final selected sample by increasing $N$, without modifying the underlying generative model. However, it remains fundamentally inefficient: the sampling process is done by \cref{eq:reverse_diff}, reward information is not used during the generation process. As a result, obtaining a sample from a high-reward region may require large $N$. Moreover, BoN sampling is inherently designed to return a single best output, and therefore does not directly address the multi-output setting \cite{microsoft_bing, grok_imagine} in which all \(N\) generated candidates are expected to be of high quality.

\subsubsection{Sampling from a Reward-tilted Distribution}
\label{Section Reward Tilted Distribution}

A line of research~\cite{singhal2025a,holderrieth2026glass,kim2026lookahead} aims to sample from the reward-tilted distribution for alignment defined as:

\begin{align}
p_{\theta}^r(\rvx_0|\rvc) \propto p_{\theta}(\rvx_0|\rvc)\exp(\lambda r(\rvx_0, \rvc)), \label{eq:target}
\end{align}
where $\lambda$ is a scalar hyperparameter that controls the strength of the reward. However, since this goal is defined on the final generated sample $\rvx_0$, it is challenging to determine how to modify the iterative denoising process in \cref{eq:reverse_diff}.

We follow a popular intermediate formulation from prior work~\cite{uehara2024fine,domingo-enrich2025adjoint,holderrieth2026glass} and adopt the score-based formulation of \cite{kim2026lookahead} as:
\begin{align}
\nabla_{\rvx_t} \log p_{\theta}^r(\rvx_t|\rvc)
=
\nabla_{\rvx_t} \log p_{\theta}(\rvx_t|\rvc)
+
\nabla_{\rvx_t}r_t^{\lambda}(\rvx_t, \rvc),
\label{eq:process_target}
\end{align}
where $r_t^{\lambda}(\rvx_t, \rvc) := \log \mathbb{E}_{p_{\theta}(\rvx_0|\rvx_t,\rvc)}\left[\exp(\lambda r(\rvx_0,\rvc))\right]$ is expected future reward at particle $\rvx_t$. Denoising via \cref{eq:reverse_diff} with the score replaced by \cref{eq:process_target} yields samples from the reward-tilted distribution in \cref{eq:target}.
Note that computing $r_t^{\lambda}$ is generally intractable, so most previous work focuses on how to approximate this term efficiently and accurately. Although this reverse-process modification is well suited for multi-output generation, existing approximations typically require costly gradient computation~\cite{bansal2024universal,na2025diffusion} or rollout sampling~\cite{potaptchik2025tilt,kim2026lookahead}, making them less efficient for test-time alignment.

\section{Method}
To mitigate the inefficiency and limited capability in multi-output settings, we refine the idea of Best-of-$N$ sampling as a diffusion-based guidance scheme. A natural approach is to inject the reward information of \({\rvx}_{0}^{\mathrm{BoN}}\) in \cref{eq:bon} directly into the reverse diffusion process, but doing so requires samples from $\mathcal{X}_0$.  Following prior works~\cite{bansal2024universal,na2025diffusion,singhal2025a}, we use one-step denoised samples as an alternative to approximate \({\rvx}_{0}^{\mathrm{BoN}}\), and guide the particle population towards it. We construct our Best-of-$N$ guidance algorithm in two steps.

\subsection{Best-of-$N$ Guidance (BoNG)}

\paragraph{Step 1: Best-of-$N$ target in $\mathcal{X}_0$.}
Let \(\{\rvx_t^i\}_{i=1}^N\) denote the current particle population. For each particle, we compute its one-step denoised sample  
\begin{align}
\hat{\rvx}_0^i
:=
\mathbb{E}_{p_\theta(\rvx_0\mid \rvx_t^i,\rvc)}[\rvx_0]
=
\rvx_t^i+\sigma_t^2\nabla_{\rvx_t^i}\log p_\theta(\rvx_t^i\mid \rvc).
\label{eq:one_step_denoised_sample}
\end{align}
These one-step denoised samples, \(\{\hat{\rvx}_0^i\}_{i=1}^{N}\), define a candidate set in $\mathcal{X}_0$. We then perform Best-of-$N$ selection over this set and define
\begin{align}
i^*=\underset{i}{\operatorname{argmax}}\;\; r(\hat{\rvx}_0^i,\rvc),
\qquad
\hat{\rvx}_0^{\mathrm{BoN}}:=\hat{\rvx}_0^{i^*},
\qquad
\rvx_t^{\mathrm{BoN}}:=\rvx_t^{i^*}.
\label{eq:bon_select}
\end{align}

\paragraph{Step 2: Adjust each particle toward \textbf{\(\hat{\rvx}_0^{\mathrm{BoN}}\)}.}
We define the \textbf{BoN target score} for each particle so that its one-step denoised sample matches \(\hat{\rvx}_0^{\mathrm{BoN}}\). Comparing this target with the one-step denoised sample in \cref{eq:one_step_denoised_sample}, we obtain
\begin{align}
\underbrace{\nabla_{\rvx_t^i}\log p_\theta^{\mathrm{BoN}}(\rvx_t^i\mid \rvc)}_{\textbf{BoN target score}}
:=
\nabla_{\rvx_t^i}\log p_\theta(\rvx_t^i\mid \rvc)
+
\underbrace{\frac{1}{\sigma_t^2}
\left(
\hat{\rvx}_0^{\mathrm{BoN}}-\hat{\rvx}_0^i
\right)}_{:=\,\nabla_{\rvx_t^i}r_t^{\mathrm{BoN}}(\rvx_t^i,\rvc)\ \textbf{(BoN guidance term)}}.
\label{eq:bong_score_def}
\end{align}
The Best-of-$N$ Guidance (BoNG) method is defined by using the \textbf{BoN guidance term} to construct the \textbf{BoN target score} at each timestep $t$. See \cref{alg:bon-select,alg:bong-update}. 


The BoN target score \(\nabla_{\rvx_t^i}\log p_\theta^{\mathrm{BoN}}(\rvx_t^i\mid \rvc)\) in \cref{eq:bong_score_def} admits a simple interpretation.
 For \(\rvx_t^{\mathrm{BoN}}\), the BoN guidance term  \(\nabla_{\rvx_t^i}r_t^{\mathrm{BoN}}(\rvx_t^i,\rvc)\)  vanishes since $\bigl(\hat{\rvx}_0^{\mathrm{BoN}}-\hat{\rvx}_0^{\mathrm{BoN}}\bigr)/\sigma_t^2=0$, and thus \(\rvx_t^{\mathrm{BoN}}\) is denoised according to the original reverse diffusion process without additional guidance.

For any particle \(\rvx_t^i\) with \(i\neq i^*\), the update can be interpreted as the sum of two terms:
\begin{align}
\nabla_{\rvx_t^i}\log p_\theta^{\mathrm{BoN}}(\rvx_t^i\mid\rvc)
=
\underbrace{\nabla_{\rvx_t^{\mathrm{BoN}}}\log p_\theta(\rvx_t^{\mathrm{BoN}}\mid\rvc)}_{\text{BoN Score}}
+
\underbrace{\frac{1}{\sigma_t^2}\bigl(\rvx_t^{\mathrm{BoN}}-\rvx_t^i\bigr)}_{\text{Positive Guidance}}.
\label{eq:8}
\end{align}
This decomposition shows that BoNG induces an asymmetric interaction among particles: each particle \(\rvx_t^i\) with \(i\neq i^*\) is attracted toward the current BoN particle by positive guidance, while simultaneously following the score-induced denoising direction of the BoN trajectory by BoN score. Overall, BoNG embeds the strong selection behavior of Best-of-$N$ sampling via a guidance term without additional feed-forward computations of the score network.

\begin{figure}[t]
    \centering

    \begin{minipage}[t]{0.50\linewidth}
        \centering

        \begin{algorithm}[H]
        \caption{Best-of-$N$ selection}
        \label{alg:bon-select}
        {\fontsize{8.5pt}{9pt}\selectfont
        \begin{algorithmic}
        \STATE \hspace*{-0.7em}\textbf{Input:} $\{\rvx_t^i\}_{i=1}^{N},\, t,\, \rvc,\, p_\theta,\, r$
        \STATE 1. \textbf{for} $i=1,\ldots,N$ \textbf{do}
        \STATE \hspace{\algorithmicindent} $\hat{\rvx}_0^i \leftarrow \mathbb{E}_{p_\theta(\rvx_0 \mid \rvx_t^i,\rvc)}[\rvx_0]$
        \STATE \hspace{1.05em} \textbf{end for}
        \STATE 2. Select $i^*$ according to ~\cref{eq:bon_select}
        \STATE 3. $\rvx_t^{\mathrm{BoN}} \leftarrow \rvx_t^{i^*}$,
               $\hat{\rvx}_0^{\mathrm{BoN}} \leftarrow \hat{\rvx}_0^{i^*}$
        \STATE \hspace*{-0.7em}\textbf{Output:} $\rvx_t^{\mathrm{BoN}},\, \hat{\rvx}_0^{\mathrm{BoN}},\, \{\hat{\rvx}_0^i\}_{i=1}^{N}$
        \end{algorithmic}}
        \end{algorithm}
        \vspace{-0.65cm}
        \begin{algorithm}[H]
        \caption{One-step BoNG update}
        \label{alg:bong-update}
        {\fontsize{8.5pt}{9pt}\selectfont
        \begin{algorithmic}
        \STATE \hspace*{-0.7em}\textbf{Input:} $\{\rvx_t^i\}_{i=1}^{N},\, t,\, \rvc,\, p_\theta,\, r,\, \sigma_t$
        \STATE 1. Obtain $\rvx_t^{\mathrm{BoN}},\, \hat{\rvx}_0^{\mathrm{BoN}},\, \{\hat{\rvx}_0^i\}_{i=1}^{N}$ from Algorithm~\ref{alg:bon-select}
        \STATE 2. \textbf{for} $i=1,\ldots,N$ \textbf{do}
        \STATE \hspace{\algorithmicindent} Update $\rvx_t^i$ to $\rvx_{t-1}^i$ using \cref{eq:bong_score_def}
        \STATE \hspace{1.05em} \textbf{end for}
        \STATE \hspace*{-0.7em}\textbf{Output:} $\{\rvx_{t-1}^i\}_{i=1}^{N}$
        \end{algorithmic}}
        \end{algorithm}
    \end{minipage}
    \hfill
    \begin{minipage}[t]{0.48\linewidth}
        \centering
        \vspace{1em}
        \includegraphics[width=\linewidth]{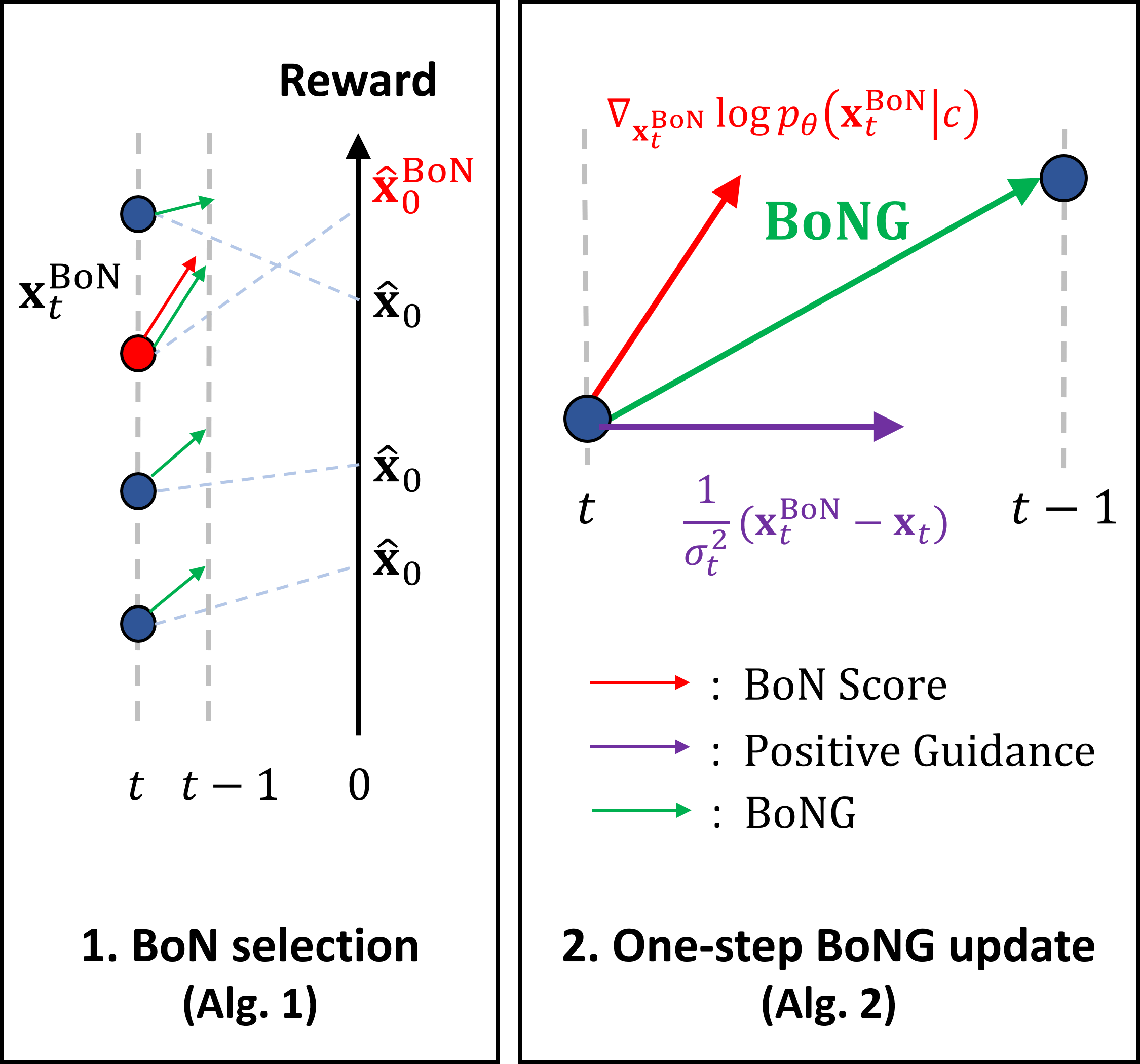}
    \end{minipage}
    \small
    \caption{
    \textbf{Dynamics of BoNG}. (Left) Algorithm Summary. (Right) Decomposition of BoN target score in \cref{eq:bong_score_def}.}
    \label{Fig2}
\end{figure}

\subsection{Connection to Prior Sample-based Test-time Alignment Methods}

We now connect BoNG to prior alignment methods that sample from the reward-tilted distribution in \cref{eq:target}. \cref{lem:fisher} and \cref{lemma1} provide a general analysis of the reward-tilted distribution that holds for any $\lambda>0$: they reduce reward guidance to estimating the tilted posterior mean and give its self-normalized importance sampling (SNIS) estimator. \cref{theorem} then connects this general analysis to BoNG, showing that hard Best-of-$N$ selection realizes the large-$\lambda$ limit of the SNIS estimator, with \cref{alg:bon-select,alg:bong-update} stating the resulting procedure.

\begin{restatable}{lemma}{lemfisher}
\label{lem:fisher}
The gradient of the expected future reward can be expressed as
\begin{align}
\nabla_{\rvx_t}r_t^{\lambda}(\rvx_t,\rvc)
=
\frac{1}{\sigma_t^2}
\left(
m_{\lambda}^{r}(\rvx_t)
-
\mathbb{E}_{p_{\theta}(\rvx_0 \mid \rvx_t,\rvc)}[\rvx_0]
\right),
\qquad
m_{\lambda}^{r}(\rvx_t)
:=
\mathbb{E}_{p_{\theta}^{r}(\rvx_0 \mid \rvx_t,\rvc)}[\rvx_0].
\label{eq:mean_discrepancy}
\end{align}
\end{restatable}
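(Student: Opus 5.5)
We work directly from the definition $r_t^{\lambda}(\rvx_t,\rvc)=\log\mathbb{E}_{p_\theta(\rvx_0\mid\rvx_t,\rvc)}[\exp(\lambda r(\rvx_0,\rvc))]$. We rewrite the posterior expectation as an integral, $Z(\rvx_t):=\int p_\theta(\rvx_0\mid\rvc)\,p(\rvx_t\mid\rvx_0)\,e^{\lambda r(\rvx_0,\rvc)}\,d\rvx_0 \,/\, p_\theta(\rvx_t\mid\rvc)$. This gives
\[
r_t^{\lambda}(\rvx_t,\rvc)=\log\int p_\theta(\rvx_0\mid\rvc)\,p(\rvx_t\mid\rvx_0)\,e^{\lambda r(\rvx_0,\rvc)}\,d\rvx_0-\log p_\theta(\rvx_t\mid\rvc).
\]
The point of this decomposition is that the only $\rvx_t$-dependence inside the integral sits in the Gaussian kernel $p(\rvx_t\mid\rvx_0)=\mathcal{N}(\rvx_t;\rvx_0,\sigma_t^2\mathbf{I})$. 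Its log-gradient is explicit:
\[
\nabla_{\rvx_t}\log p(\rvx_t\mid\rvx_0)=(\rvx_0-\rvx_t)/\sigma_t^2 .
\]

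\textbf{Gradient of the first term.} We differentiate under the integral sign and use the log-derivative trick. The gradient of the first log-integral becomes a weighted average of $(\rvx_0-\rvx_t)/\sigma_t^2$. The weights are proportional to $p_\theta(\rvx_0\mid\rvc)\,p(\rvx_t\mid\rvx_0)\,e^{\lambda r}$, which is exactly $p_\theta(\rvx_0\mid\rvx_t,\rvc)\,e^{\lambda r}$ up to normalization. We identify this normalized weight with the tilted posterior $p_\theta^r(\rvx_0\mid\rvx_t,\rvc)$; this is the natural meaning of the notation, consistent with \cref{eq:target}. The first term therefore contributes $(m_\lambda^r(\rvx_t)-\rvx_t)/\sigma_t^2$.

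\textbf{Gradient of the second term.} The same computation with $\lambda=0$ is Tweedie's formula, already stated as \cref{eq:one_step_denoised_sample}. It gives
\[
\nabla_{\rvx_t}\log p_\theta(\rvx_t\mid\rvc)=\bigl(\mathbb{E}_{p_\theta(\rvx_0\mid\rvx_t,\rvc)}[\rvx_0]-\rvx_t\bigr)/\sigma_t^2 .
\]

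\textbf{Conclusion.} Subtracting the two expressions, the $\rvx_t$ terms cancel, which yields \cref{eq:mean_discrepancy}.

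The main obstacle is justifying the interchange of gradient and integral. This requires mild integrability assumptions, for example bounded $r$, or $e^{\lambda r}$ integrable against the Gaussian-convolved base measure. The Gaussian kernel's derivative is then dominated locally uniformly in $\rvx_t$, so dominated convergence applies. I would state this as a standing regularity assumption rather than prove it in full.

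A secondary point is that \cref{eq:one_step_denoised_sample} uses the variance-exploding kernel $\mathcal{N}(\rvx_0,\sigma_t^2\mathbf{I})$ from the Preliminaries. I would carry out the proof in that parameterization, where the $1/\sigma_t^2$ factor appears exactly as in the statement.
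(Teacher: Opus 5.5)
Your proposal is correct and follows essentially the same route as the paper: split $r_t^{\lambda}$ via Bayes' rule into $\log\int p_\theta(\rvx_0\mid\rvc)\,p(\rvx_t\mid\rvx_0)\,e^{\lambda r(\rvx_0,\rvc)}\,d\rvx_0-\log p_\theta(\rvx_t\mid\rvc)$, differentiate each term with the log-derivative trick to get expectations of $(\rvx_0-\rvx_t)/\sigma_t^2$ under the tilted and untilted posteriors, and let the $\rvx_t$ terms cancel. Your remark on justifying differentiation under the integral sign is a regularity condition the paper leaves implicit, not a different approach.
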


\cref{lem:fisher} follows from the Fisher identity and related score-based formulations~\citep{kim2026lookahead,potaptchik2025tilt}; its role is preparatory. It shows that reward guidance requires no reward derivatives and reduces to estimating the reward-tilted posterior mean \(m_{\lambda}^{r}(\rvx_t)\). Derivation is in \cref{app:lem}.

 We now provide a Monte Carlo approximation scheme of \(m_{\lambda}^{r}(\rvx_t)\) via self-normalized importance sampling~\citep{NIPS2015_39027dfa} (SNIS) by the following lemma, which holds for any \(\lambda>0\).

\begin{restatable}{lemma}{lem}
\label{lemma1}
Let \(q(\rvx_0)\) be any proposal distribution whose support is $\mathcal{X}_0$. Then, given proposal samples \(\hat{\rvx}_0^1,\dots,\hat{\rvx}_0^N \overset{\text{i.i.d.}}{\sim} q(\rvx_0)\), the reward-tilted posterior mean \(m_{\lambda}^{r}(\rvx_t)\) admits the self-normalized importance sampling approximation
\begin{align}
m_{\lambda}^{r}(\rvx_t)
\approx
\hat m_{N,\lambda}^{r}(\rvx_t)
:=
\sum_{i=1}^N \alpha_i^{(\lambda)} \hat{\rvx}_0^i,
\label{eq:snis_rtpm}
\end{align}
where
\[
\alpha_i^{(\lambda)}
=
\operatorname{softmax}_i\!\Bigl(
\log p_\theta(\hat{\rvx}_0^i \mid \rvx_t,\rvc)
-
\log q(\hat{\rvx}_0^i)
+
\lambda r(\hat{\rvx}_0^i,\rvc)
\Bigr).
\]
\end{restatable}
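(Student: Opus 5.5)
The plan is to write the reward-tilted posterior mean as a ratio of two expectations under the base posterior, change measure to the proposal $q$, and replace both expectations by Monte Carlo averages over the same samples $\hat{\rvx}_0^1,\dots,\hat{\rvx}_0^N$. The resulting ratio is exactly a softmax-weighted average.

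\textbf{Step 1: the tilted posterior.} First I make the tilted posterior explicit. Conditioning the reward-tilted joint on $\rvx_t$ gives
\[
p_\theta^r(\rvx_0\mid\rvx_t,\rvc)=\frac{p_\theta(\rvx_0\mid\rvx_t,\rvc)\exp(\lambda r(\rvx_0,\rvc))}{Z_\lambda(\rvx_t)},\qquad Z_\lambda(\rvx_t)=\mathbb{E}_{p_\theta(\rvx_0\mid\rvx_t,\rvc)}\bigl[\exp(\lambda r(\rvx_0,\rvc))\bigr],
\]
so that $\log Z_\lambda = r_t^\lambda$, consistent with \cref{eq:process_target} and \cref{lem:fisher}. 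I would justify this briefly: the intermediate tilted marginal $p_\theta^r(\rvx_t)\propto p_\theta(\rvx_t)Z_\lambda(\rvx_t)$ is induced by the joint $p_\theta(\rvx_0,\rvx_t)\exp(\lambda r)$, and Bayes' rule then yields the displayed posterior.

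\textbf{Step 2: change of measure.} Since $q$ has full support on $\mathcal{X}_0$, it dominates $p_\theta(\cdot\mid\rvx_t,\rvc)$. Define the unnormalized weight
\[
w(\rvx_0)=\frac{p_\theta(\rvx_0\mid\rvx_t,\rvc)\exp(\lambda r(\rvx_0,\rvc))}{q(\rvx_0)}.
\]
Then $m^r_\lambda(\rvx_t)=\mathbb{E}_q[w(\rvx_0)\rvx_0]/\mathbb{E}_q[w(\rvx_0)]$. The normalizer $Z_\lambda$ cancels between numerator and denominator, which is why the estimator never needs the intractable quantity $r_t^\lambda$.

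\textbf{Step 3: Monte Carlo and softmax form.} I replace both expectations by empirical averages over the i.i.d. samples. The factors $1/N$ cancel, leaving $\sum_i w(\hat{\rvx}_0^i)\hat{\rvx}_0^i/\sum_j w(\hat{\rvx}_0^j)$. Writing $w=\exp(\log p_\theta-\log q+\lambda r)$ turns the normalized weights into exactly $\alpha_i^{(\lambda)}$, the softmax of these log-weights.

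\textbf{Step 4: sense of the approximation.} To make ``$\approx$'' precise, I would add that by the strong law of large numbers numerator and denominator converge almost surely, assuming $\mathbb{E}_q[w\|\rvx_0\|]<\infty$ and $\mathbb{E}_q[w]=Z_\lambda\in(0,\infty)$. Hence $\hat m^r_{N,\lambda}\to m^r_\lambda$ almost surely, which is the standard SNIS consistency result.

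The main obstacle is Step 1, namely justifying that the conditional of the tilted process given $\rvx_t$ has this form. This requires interpreting $p^r_\theta(\rvx_0\mid\rvx_t,\rvc)$ as the posterior under the tilted joint rather than under the base process. Once that is granted, the rest is routine. A secondary subtlety is the integrability assumption that makes the estimator consistent; I would state it explicitly rather than try to derive it.
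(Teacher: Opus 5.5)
Your proposal is correct and follows the paper's proof essentially step for step. Like the paper, you write $m_\lambda^r(\rvx_t)$ as a ratio of two expectations under $p_\theta(\rvx_0\mid\rvx_t,\rvc)$, change measure to $q$, and replace both expectations by Monte Carlo averages over the same samples so that the $1/N$ factors cancel, then take logarithms of the unnormalized weights to get the softmax form. Your Step 4, almost-sure consistency via the strong law of large numbers under an explicit integrability assumption, is a sound addition; the paper omits it and leaves ``$\approx$'' informal.
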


Proof is in \cref{app:lemma1}. \Cref{lemma1} provides a unified view of prior sample-based guidance methods: LiDAR~\citep{kim2026lookahead} corresponds to the choice \(q(\rvx_0)=p_\theta(\rvx_0\mid \rvc)\), while Tilt~\citep{potaptchik2025tilt} uses \(q(\rvx_0)=p_\theta(\rvx_0\mid \rvx_t,\rvc)\). Although these methods differ in their choice of proposal distribution, both approximate \(m_{\lambda}^{r}(\rvx_t)\) through weighted proposal samples. As \(\lambda\to\infty\), the softmax weights concentrate on the highest-reward proposal sample, with \(\alpha_i^{(\lambda)}\to 1\) for the maximizer and \(\alpha_i^{(\lambda)}\to 0\) otherwise, so the SNIS estimator reduces to the highest-reward sample in the proposal set. The following theorem shows that, in this large-\(\lambda\) limit, the process target score in \cref{eq:process_target} coincides with the BoN target score of BoNG in \cref{eq:bong_score_def} when the proposal set is given by the one-step denoised samples \(\{\hat{\rvx}_0^i\}_{i=1}^{N}\) with \(\hat{\rvx}_0^i=\mathbb{E}_{p_\theta(\rvx_0\mid \rvx_t^i,\rvc)}[\rvx_0]\) in \cref{eq:one_step_denoised_sample}.

\begin{figure}[t]
    \centering
    \includegraphics[width=\textwidth]{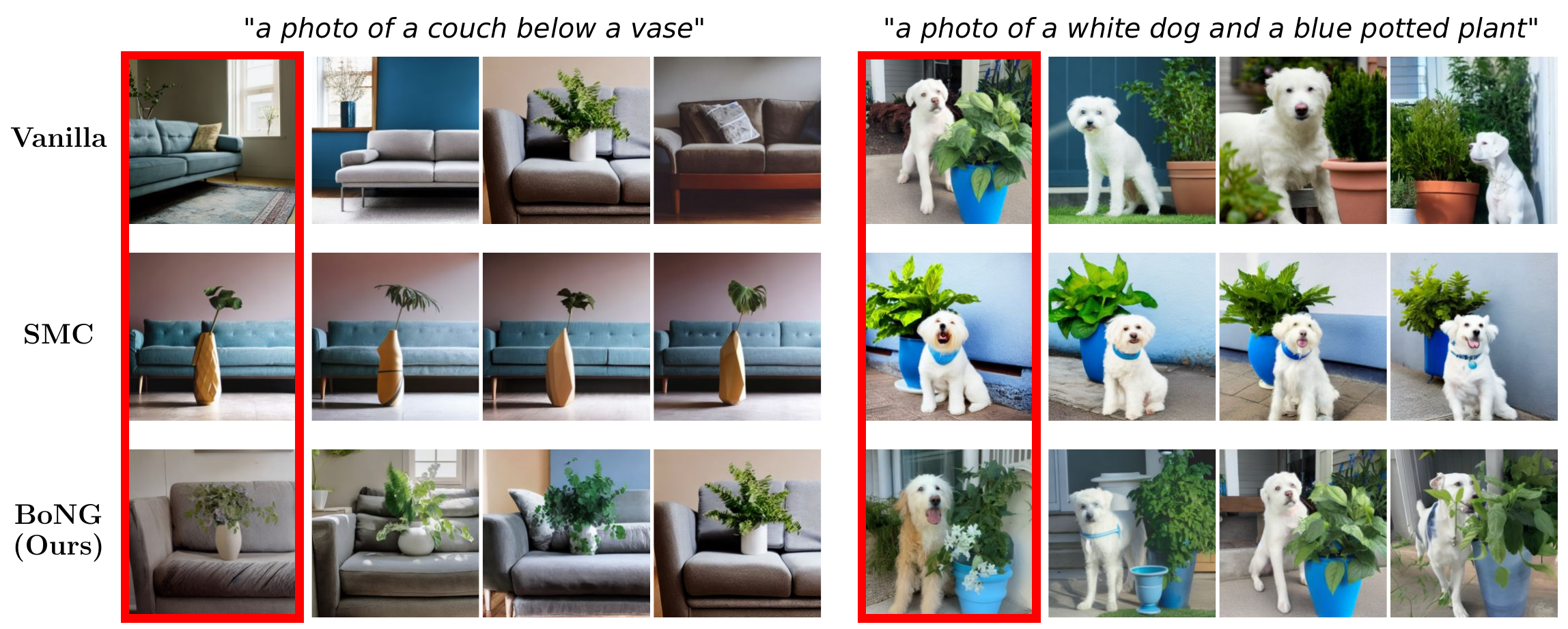}
    \vspace{-2mm}
    \small
    \caption{Qualitative comparison of generated samples from Vanilla, SMC, and BoNG (SD v1.5 DDPM 100 step sampler). \textcolor{red}{Red} boxes denote Best-of-$N$ samples for each method. Vanilla sampling often fails to generate images which align well with specific prompts such as ``below a vase'' and ``a blue potted plant''. SMC generates better aligned images, but resampling leads to premature particle collapse. In contrast, BoNG generates samples that remain well aligned with the prompt while preserving diverse layouts and appearances.}
    \label{fig:samples_compare}
    \vspace{-3mm}
\end{figure}

\begin{restatable}{theorem}{thm}
\label{theorem}
Let \(\{\hat{\rvx}_0^i\}_{i=1}^{N}\) be the proposal set of one-step denoised samples in \cref{eq:one_step_denoised_sample} with a unique maximizer \(i^*=\arg\max_{1\le i\le N} r(\hat{\rvx}_0^i,\rvc)\), and let \(\hat{\rvx}_0^{\mathrm{BoN}}:=\hat{\rvx}_0^{i^*}\). Then \(\hat m_{N,\lambda}^{r}(\rvx_t^i)\to\hat{\rvx}_0^{\mathrm{BoN}}\) as \(\lambda\to\infty\), and the SNIS approximation of the reward-guidance gradient in \cref{eq:mean_discrepancy} converges to the BoN guidance term for each particle:
\begin{align}
\nabla_{\rvx_t^i} r_t^{\lambda}(\rvx_t^i,\rvc)
\;\approx\;
\frac{1}{\sigma_t^2}
\bigl(
\hat m_{N,\lambda}^{r}(\rvx_t^i)-\hat{\rvx}_0^i
\bigr)
\;\xrightarrow{\;\lambda\to\infty\;}\;
\nabla_{\rvx_t^i} r_t^{\mathrm{BoN}}(\rvx_t^i,\rvc).
\label{eq:thm_limit_grad}
\end{align}
\end{restatable}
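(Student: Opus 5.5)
The plan is to prove the convergence of the SNIS estimator directly from the explicit softmax weights in \cref{lemma1}, and then get the guidance-term statement by continuity. Fix a particle index \(i\) and write the logits as
\[
\ell_j(\lambda):=a_j+\lambda r_j,\qquad a_j:=\log p_\theta(\hat{\rvx}_0^j\mid \rvx_t^i,\rvc)-\log q(\hat{\rvx}_0^j),\qquad r_j:=r(\hat{\rvx}_0^j,\rvc).
\]
The proposal set is the fixed collection \(\{\hat{\rvx}_0^j\}_{j=1}^N\) of one-step denoised samples from \cref{eq:one_step_denoised_sample}.

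The only structural requirement is that every \(a_j\) is finite. In that case \(a_j\) does not depend on \(\lambda\). This holds when \(q\) has full support, as \cref{lemma1} assumes, and the posterior density is positive at the candidates, which is true for Gaussian-smoothed posteriors. I would state this as a mild standing assumption.

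\textbf{Step 1: weight concentration.} For \(j\neq i^*\), dividing numerator and denominator by \(e^{\ell_{i^*}(\lambda)}\) gives
\[
\alpha_j^{(\lambda)}\le \exp\bigl(a_j-a_{i^*}-\lambda (r_{i^*}-r_j)\bigr).
\]
Let \(\Delta:=\min_{j\neq i^*}(r_{i^*}-r_j)\). Uniqueness of the maximizer gives \(\Delta>0\). So every such weight decays like \(C e^{-\lambda\Delta}\) with \(C:=\max_j e^{a_j-a_{i^*}}<\infty\). Because the weights sum to one, \(\alpha_{i^*}^{(\lambda)}=1-\sum_{j\neq i^*}\alpha_j^{(\lambda)}\to 1\).

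\textbf{Step 2: estimator convergence.} Write
\[
\hat m_{N,\lambda}^{r}(\rvx_t^i)-\hat{\rvx}_0^{\mathrm{BoN}}=\sum_{j\neq i^*}\alpha_j^{(\lambda)}\bigl(\hat{\rvx}_0^j-\hat{\rvx}_0^{i^*}\bigr).
\]
Its norm is bounded by
\[
(N-1)\,C\,e^{-\lambda\Delta}\max_j\|\hat{\rvx}_0^j-\hat{\rvx}_0^{i^*}\|,
\]
which tends to \(0\). This proves the first claim, with an exponential rate.

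\textbf{Step 3: the guidance term.} The first relation in \cref{eq:thm_limit_grad} is not a limit statement. It is just \cref{lem:fisher} with \(m_\lambda^r\) replaced by its SNIS approximation from \cref{lemma1}, and with \(\mathbb{E}_{p_\theta(\rvx_0\mid\rvx_t^i,\rvc)}[\rvx_0]\) identified with \(\hat{\rvx}_0^i\) by \cref{eq:one_step_denoised_sample}. Since \(\sigma_t^2>0\) is fixed and the map \(m\mapsto (m-\hat{\rvx}_0^i)/\sigma_t^2\) is continuous, Step 2 gives
\[
\frac{1}{\sigma_t^2}\bigl(\hat m_{N,\lambda}^{r}(\rvx_t^i)-\hat{\rvx}_0^i\bigr)\to\frac{1}{\sigma_t^2}\bigl(\hat{\rvx}_0^{\mathrm{BoN}}-\hat{\rvx}_0^i\bigr).
\]
By the definition in \cref{eq:bong_score_def}, the right-hand side is exactly \(\nabla_{\rvx_t^i}r_t^{\mathrm{BoN}}(\rvx_t^i,\rvc)\). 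Adding \(\nabla\log p_\theta\) to both sides then shows that the process target score \cref{eq:process_target} converges to the BoN target score.

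\textbf{Main obstacle.} The delicate point is the status of the importance correction \(a_j\), not the limit itself. The candidates \(\hat{\rvx}_0^j\) are deterministic posterior means from different particles, not i.i.d. draws from some \(q\), and \(\log p_\theta(\hat{\rvx}_0^j\mid\rvx_t^i,\rvc)\) may be huge or even infinite if the posterior is degenerate. My first move is to note that the argument needs only that the \(a_j\) are finite and independent of \(\lambda\): any finite \(\lambda\)-independent offset is dominated by \(\lambda\Delta\). This makes the limit independent of the choice of \(q\), which justifies BoNG ignoring it. I would flag that \cref{eq:thm_limit_grad} is an approximation followed by an exact limit, and that the limit is taken at fixed \(N\) and \(t\).
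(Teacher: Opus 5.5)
Your proposal is correct and follows the paper's proof essentially step for step. The paper likewise shows $\alpha_j^{(\lambda)}/\alpha_{i^*}^{(\lambda)} = C_j\exp\bigl(\lambda(r_j-r_{i^*})\bigr)\to 0$ with a $\lambda$-independent constant $C_j$, uses $\sum_j\alpha_j^{(\lambda)}=1$ to get $\alpha_{i^*}^{(\lambda)}\to 1$, and substitutes the limit into \cref{lem:fisher} at $\rvx_t=\rvx_t^i$ with $\mathbb{E}_{p_\theta(\rvx_0\mid\rvx_t^i,\rvc)}[\rvx_0]=\hat{\rvx}_0^i$; your explicit $e^{-\lambda\Delta}$ rate and your stated assumption that the offsets $a_j$ are finite (which the paper leaves implicit when it calls $C_j$ independent of $\lambda$) are small refinements, not a different route.
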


Proof is in \cref{app:theorem}. \cref{theorem} shows that BoNG realizes the large-\(\lambda\) limit of sample-based guidance toward the reward-tilted distribution, while eliminating the costly forward or backward rollouts of prior methods. Here, the one-step Tweedie estimates serve as a surrogate proposal set rather than exact i.i.d. proposal samples. The same construction also admits a finite-\(\lambda\) relaxation: replacing the hard selection with the soft SNIS weights \(\alpha_i^{(\lambda)}\) of \cref{lemma1} yields a guidance target that interpolates between the average of the proposal set and the BoN target, so that \(\lambda\) directly controls the strength of reward tilting. We refer to \cref{app:finite_lambda} for the full derivation, the approximation gaps of this soft variant, and a \(\lambda\) sweep verifying its predicted reward--diversity scaling.

\subsection{Instantiation of BoNG}
\label{Section: BoNG time window}

BoNG can be applied flexibly during inference time, which allows the method to interpolate between aggressive reward optimization and diversity preservation. Similarly to SMC~\cite{singhal2025a}, we instantiate BoNG by specifying a guidance timestep window, i.e. the subset of reverse diffusion steps at which the BoN guidance term is activated. The guidance window determines how often the particle population is steered toward the current Best-of-$N$ target during reverse diffusion. Applying BoNG over more timesteps yields stronger steering toward high-reward regions, whereas restricting the window leaves more denoising steps governed by the original diffusion process. In this sense, the guidance timestep window in BoNG plays a role analogous to the reward-strength hyperparameter \(\lambda\) in prior methods~\cite{bansal2024universal,na2025diffusion,li2025derivativefree} targeting the reward-tilted distribution in \cref{eq:target}. Consistent with \cref{theorem}, increasing the guidance window corresponds to a more reward-tilted target distribution. In addition, following prior guidance methods~\cite{chung2024cfg++,ho2021classifierfree}, we introduce a guidance scale multiplying the BoN guidance term in \cref{eq:bong_score_def}. This provides an additional degree of control over the magnitude of guidance at each timestep. Further analysis of BoNG is provided in \cref{app:protocol,app:window_effect}.

\section{Experiments}
We use GenEval~\cite{ghosh2023geneval} prompts for evaluation and ImageReward (IR)~\cite{xu2023imagereward} as the reward function during inference time. We report IR, CLIP Score~\cite{hessel2021clipscore}, and GenEval score for all methods. For single-output performance we compare against Vanilla BoN sampling and Sequential Monte Carlo (SMC)~\cite{singhal2025a}, since both methods are designed for best-sample generation under $N$, with comparable computational cost. We report the performance of the BoN sample (i.e., highest IR particle) accordingly. For multi-output performance we compare against representative guidance-based baselines, including gradient-guidance~\cite{bansal2024universal,na2025diffusion} and sample-based guidance~\cite{kim2026lookahead} methods. For each method, we report the average reward of 4 generated samples. See \cref{app:B} for more details.

\newlength{\bonmetricwidth}
\setlength{\bonmetricwidth}{0.85cm}
\newcommand{\bonmetriccell}[1]{\makebox[\bonmetricwidth][c]{#1}}

\renewcommand{\arraystretch}{1.08}
\setlength{\tabcolsep}{3pt}

\begin{table*}[t!]
\small
\centering
\caption{Best-of-$N$ (BoN) performance  on GenEval prompts with different numbers of generated images $N$, using SD v1.5~\cite{rombach2022high} and SDXL~\cite{podell2024sdxl} backbones. ImageReward (IR) is used as the reward for all methods. \textbf{Bold} entries denote the top-performing results within each backbone group.}
\label{Table1}
\vspace{-2mm}

\begin{tabular}{clccccccccc}
\toprule
&
& \multicolumn{9}{c}{\textbf{Performance} $(\uparrow)$} \\
\cmidrule(l){3-11}
\textbf{Backbone}
& \textbf{Method}
& \multicolumn{3}{c}{$\mathnormal{N}=2$}
& \multicolumn{3}{c}{$\mathnormal{N}=3$}
& \multicolumn{3}{c}{$\mathnormal{N}=4$} \\
\cmidrule(l){3-11}
&
& \bonmetriccell{IR} & \bonmetriccell{CLIP} & \bonmetriccell{GenEval}
& \bonmetriccell{IR} & \bonmetriccell{CLIP} & \bonmetriccell{GenEval}
& \bonmetriccell{IR} & \bonmetriccell{CLIP} & \bonmetriccell{GenEval} \\
\midrule
\midrule

\begin{tabular}[c]{@{}c@{}}SD v1.5 (0.9B)\end{tabular}
& Vanilla
& \bonmetriccell{0.317} & \bonmetriccell{0.277} & \bonmetriccell{0.473}
& \bonmetriccell{0.545} & \bonmetriccell{0.281} & \bonmetriccell{0.496}
& \bonmetriccell{0.682} & \bonmetriccell{0.283} & \bonmetriccell{0.520} \\

\begin{tabular}[c]{@{}c@{}}(w/ DDIM 50 steps)\end{tabular}
& SMC
& \bonmetriccell{0.317} & \bonmetriccell{0.277} & \bonmetriccell{0.473}
& \bonmetriccell{0.532} & \bonmetriccell{0.281} & \bonmetriccell{\textbf{0.512}}
& \bonmetriccell{0.636} & \bonmetriccell{0.282} & \bonmetriccell{0.509} \\

& \cellcolor{gray!25}BoNG
& \cellcolor{gray!25}\bonmetriccell{\textbf{0.407}}
& \cellcolor{gray!25}\bonmetriccell{\textbf{0.280}}
& \cellcolor{gray!25}\bonmetriccell{\textbf{0.480}}
& \cellcolor{gray!25}\bonmetriccell{\textbf{0.659}}
& \cellcolor{gray!25}\bonmetriccell{\textbf{0.283}}
& \cellcolor{gray!25}\bonmetriccell{0.507}
& \cellcolor{gray!25}\bonmetriccell{\textbf{0.793}}
& \cellcolor{gray!25}\bonmetriccell{\textbf{0.285}}
& \cellcolor{gray!25}\bonmetriccell{\textbf{0.535}} \\

\midrule

\begin{tabular}[c]{@{}c@{}}SD v1.5 (0.9B)\end{tabular}
& Vanilla
& \bonmetriccell{0.396} & \bonmetriccell{0.279} & \bonmetriccell{0.481}
& \bonmetriccell{0.605} & \bonmetriccell{0.282} & \bonmetriccell{0.513}
& \bonmetriccell{0.742} & \bonmetriccell{0.284} & \bonmetriccell{0.525} \\

\begin{tabular}[c]{@{}c@{}}(w/ DDPM 100 steps)\end{tabular}
& SMC
& \bonmetriccell{0.396} & \bonmetriccell{0.279} & \bonmetriccell{0.481}
& \bonmetriccell{0.675} & \bonmetriccell{0.283} & \bonmetriccell{0.511}
& \bonmetriccell{0.826} & \bonmetriccell{\textbf{0.286}} & \bonmetriccell{0.529} \\

& \cellcolor{gray!25}BoNG
& \cellcolor{gray!25}\bonmetriccell{\textbf{0.511}}
& \cellcolor{gray!25}\bonmetriccell{\textbf{0.280}}
& \cellcolor{gray!25}\bonmetriccell{\textbf{0.491}}
& \cellcolor{gray!25}\bonmetriccell{\textbf{0.749}}
& \cellcolor{gray!25}\bonmetriccell{\textbf{0.284}}
& \cellcolor{gray!25}\bonmetriccell{\textbf{0.521}}
& \cellcolor{gray!25}\bonmetriccell{\textbf{0.894}}
& \cellcolor{gray!25}\bonmetriccell{\textbf{0.286}}
& \cellcolor{gray!25}\bonmetriccell{\textbf{0.555}} \\

\midrule

\begin{tabular}[c]{@{}c@{}}SDXL (2.6B)\end{tabular}
& Vanilla
& \bonmetriccell{0.899} & \bonmetriccell{0.285} & \bonmetriccell{\textbf{0.578}}
& \bonmetriccell{1.060} & \bonmetriccell{0.288} & \bonmetriccell{\textbf{0.600}}
& \bonmetriccell{1.140} & \bonmetriccell{0.289} & \bonmetriccell{\textbf{0.613}} \\

\begin{tabular}[c]{@{}c@{}}(w/ DDIM 50 steps)\end{tabular}
& SMC
& \bonmetriccell{0.899} & \bonmetriccell{0.285} & \bonmetriccell{\textbf{0.578}}
& \bonmetriccell{1.011} & \bonmetriccell{0.287} & \bonmetriccell{0.596}
& \bonmetriccell{1.093} & \bonmetriccell{0.289} & \bonmetriccell{0.612} \\

& \cellcolor{gray!25}BoNG
& \cellcolor{gray!25}\bonmetriccell{\textbf{0.911}}
& \cellcolor{gray!25}\bonmetriccell{\textbf{0.287}}
& \cellcolor{gray!25}\bonmetriccell{0.573}
& \cellcolor{gray!25}\bonmetriccell{\textbf{1.073}}
& \cellcolor{gray!25}\bonmetriccell{\textbf{0.289}}
& \cellcolor{gray!25}\bonmetriccell{0.594}
& \cellcolor{gray!25}\bonmetriccell{\textbf{1.149}}
& \cellcolor{gray!25}\bonmetriccell{\textbf{0.292}}
& \cellcolor{gray!25}\bonmetriccell{0.606} \\

\midrule

\begin{tabular}[c]{@{}c@{}}SDXL (2.6B)\end{tabular}
& Vanilla
& \bonmetriccell{1.037} & \bonmetriccell{0.286} & \bonmetriccell{\textbf{0.600}}
& \bonmetriccell{1.162} & \bonmetriccell{\textbf{0.289}} & \bonmetriccell{0.617}
& \bonmetriccell{1.228} & \bonmetriccell{\textbf{0.290}} & \bonmetriccell{0.634} \\

\begin{tabular}[c]{@{}c@{}}(w/ DDPM 100 steps)\end{tabular}
& SMC
& \bonmetriccell{1.037} & \bonmetriccell{0.286} & \bonmetriccell{\textbf{0.600}}
& \bonmetriccell{1.191} & \bonmetriccell{0.288} & \bonmetriccell{\textbf{0.623}}
& \bonmetriccell{\textbf{1.264}} & \bonmetriccell{\textbf{0.290}} & \bonmetriccell{0.631} \\

& \cellcolor{gray!25}BoNG
& \cellcolor{gray!25}\bonmetriccell{\textbf{1.059}}
& \cellcolor{gray!25}\bonmetriccell{\textbf{0.287}}
& \cellcolor{gray!25}\bonmetriccell{0.597}
& \cellcolor{gray!25}\bonmetriccell{\textbf{1.194}}
& \cellcolor{gray!25}\bonmetriccell{\textbf{0.289}}
& \cellcolor{gray!25}\bonmetriccell{0.618}
& \cellcolor{gray!25}\bonmetriccell{1.259}
& \cellcolor{gray!25}\bonmetriccell{\textbf{0.290}}
& \cellcolor{gray!25}\bonmetriccell{\textbf{0.648}} \\

\midrule
\bottomrule
\end{tabular}

\vspace{-3mm}
\end{table*}

\renewcommand{\arraystretch}{1.0}

\newlength{\avgmetriccolwidth}
\setlength{\avgmetriccolwidth}{0.82cm}
\newcommand{\avgmetriccell}[1]{\makebox[\avgmetriccolwidth][c]{#1}}

\renewcommand{\arraystretch}{1.0}
\setlength{\tabcolsep}{1.6mm}

\begin{table*}[t]
\centering

\begin{minipage}[t]{0.59\textwidth}
\vspace{0pt}
\small
\centering
\caption{Average performance on GenEval prompts with SD v1.5 backbone. Metrics are averaged over the 4 generated images per prompt, and cost is measured for generating 4 images (5 for Best-4-of-5) on a single A100 GPU. \textbf{Bold} entries indicate the best results.}
\vspace{0.2cm}
\label{Table2}
\vspace{-2mm}

\resizebox{\linewidth}{!}{%
\begin{tabular}{lccccc}
\toprule
\textbf{Method}
& \multicolumn{3}{c}{\textbf{Performance} $(\uparrow)$}
& \multicolumn{2}{c}{\textbf{Cost} $(\downarrow)$} \\
\cmidrule(lr){2-4} \cmidrule(l){5-6}
& \avgmetriccell{IR}
& \avgmetriccell{CLIP}
& \avgmetriccell{GenEval}
& \avgmetriccell{Time{\scriptsize~(s)}}
& \avgmetriccell{Mem.{\scriptsize~(G)}} \\
\midrule
\midrule

Vanilla
& \avgmetriccell{-0.001}
& \avgmetriccell{0.271}
& \avgmetriccell{0.426}
& \avgmetriccell{7.07}
& \avgmetriccell{8.90} \\
\cmidrule(lr){1-6}

Vanilla (250 steps)
& \avgmetriccell{0.003}
& \avgmetriccell{0.272}
& \avgmetriccell{0.430}
& \avgmetriccell{17.42}
& \avgmetriccell{\textbf{8.90}} \\

Vanilla Best-4-of-5
& \avgmetriccell{0.180}
& \avgmetriccell{0.275}
& \avgmetriccell{0.453}
& \avgmetriccell{8.83}
& \avgmetriccell{9.97} \\

UG~\cite{bansal2024universal}
& \avgmetriccell{0.326}
& \avgmetriccell{0.262}
& \avgmetriccell{0.355}
& \avgmetriccell{58.36}
& \avgmetriccell{28.16} \\

DATE~\cite{na2025diffusion}
& \avgmetriccell{0.364}
& \avgmetriccell{0.274}
& \avgmetriccell{0.438}
& \avgmetriccell{32.89}
& \avgmetriccell{24.71} \\

LiDAR~\cite{kim2026lookahead}
& \avgmetriccell{0.384}
& \avgmetriccell{0.278}
& \avgmetriccell{0.478}
& \avgmetriccell{13.41}
& \avgmetriccell{\textbf{8.90}} \\

\cellcolor{gray!25}BoNG
& \cellcolor{gray!25}\avgmetriccell{\textbf{0.506}}
& \cellcolor{gray!25}\avgmetriccell{\textbf{0.280}}
& \cellcolor{gray!25}\avgmetriccell{\textbf{0.511}}
& \cellcolor{gray!25}\avgmetriccell{\textbf{8.06}}
& \cellcolor{gray!25}\avgmetriccell{\textbf{8.90}} \\

\midrule
\bottomrule
\end{tabular}%
}

\vspace{-2mm}
\end{minipage}%
\hfill%
\begin{minipage}[t]{0.37\textwidth}
\vspace{0pt}
\centering
\vspace{-0.1cm}
\includegraphics[width=\linewidth]{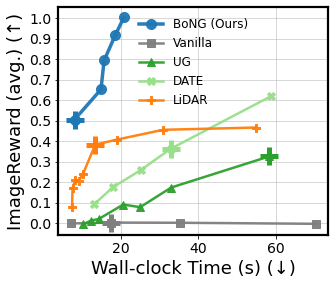}
\vspace{-2mm}

\refstepcounter{figure}
\parbox{\linewidth}{\small
Figure~\thefigure: Reward–time trade-off. The reported time denotes the average time required to generate four images. The cross marker on each curve corresponds to the performance reported in \cref{Table2}.}
\label{Fig3}
\end{minipage}

\vspace{-3mm}
\end{table*}

\renewcommand{\arraystretch}{1.0}

\subsection{Single-Output Performance}
As shown in \cref{Table1}, BoNG consistently outperforms baselines across different backbone models with varying sample budget \(N\). Over 36 comparisons, BoNG achieves the highest score in 29 cases, ranking first in 80.56\% of all comparisons against baselines. Even with 2 particles, BoNG shows steep performance gains over baselines and further shows superior performance across $N=2,3,4$. In particular, BoNG with $N=4$ outperforms Vanilla BoN with $N=8$ in GenEval (see \cref{Fig1}). This advantage scales to larger budgets where BoNG continues to outperform SMC and Vanilla BoN (see \cref{Fig8}). Overall, these results demonstrate that even with small $N$ budget, BoNG outputs a high reward sample compared to Vanilla BoN and SMC. BoNG is also applicable to modern flow-matching backbones, retaining its advantage on FLUX.1-dev (12B)~\cite{flux2024} (\cref{tab:flux}).

\paragraph{Why does BoNG outperform SMC?} SMC also steers the generative process by incorporating reward information by interaction of particles, similar to BoNG. The main difference is that SMC relies on resampling, whereas BoNG does not. In SMC, even if denoising starts from \(N\) independent particles, repeated resampling based on intermediate rewards can collapse the particle population onto only a few trajectories, or even a single one. Such particle collapse reduces exploration and degrades the quality of the final Best-of-$N$ sample. In contrast, BoNG preserves all particles and instead performs Best-of-$N$ selection in the predicted clean-sample space at each intermediate timestep. The selected high-reward target is then used as a guidance signal so that the particle population is steered to high-reward regions without particle copy. As a result, BoNG maintains a better exploration-exploitation tradeoff than SMC, which leads to stronger final BoN performance under the same particle budget. Moreover, since guidance and resampling act on different components of the particle system, the two are composable: applying BoNG on top of SMC improves both the mean and the BoN reward at every particle count (\cref{app:smc_bong}).

\subsection{Multi-Output Performance}
As shown in \cref{Table2}, BoNG achieves the best performance across all three metrics, demonstrating that its benefit is not limited to improving only the final selected sample (BoN) but also for overall samples. More importantly, these gains are obtained with very small test-time overhead. Compared with prior guidance methods, BoNG shows the strongest average performance while remaining close to vanilla diffusion sampling in inference time.

\paragraph{Why does BoNG outperform prior guidance methods?}
These results are consistent with \cref{theorem}, which shows that BoNG can also be interpreted as a sample-based guidance method targeting the reward-tilted distribution \cref{eq:target}. Its strong empirical performance comes from combining this connection with the core strength of Best-of-$N$ sampling: selecting a single high-reward output from \(N\) generated candidates. Rather than applying Best-of-$N$ sampling only after generation, BoNG injects the same principle directly into reverse diffusion by repeatedly using the current Best-of-$N$ target to steer the denoising particle population toward higher-reward regions. At the same time, unlike prior sample-based guidance methods, BoNG does not require additional forward or backward rollouts to construct guidance samples. Instead, it directly reuses the base scores and one-step denoised samples of particles already present in the denoising process. As a result, BoNG inherits the strong best-sample behavior of Best-of-$N$ sampling while avoiding the computational overhead of prior guidance methods, leading to a substantially better efficiency performance tradeoff of the overall quality of multiple generated samples.

\begin{figure*}[t!]
\centering

\begin{minipage}[t]{0.32\textwidth}
\vspace{0pt}
\centering
\includegraphics[width=\linewidth]{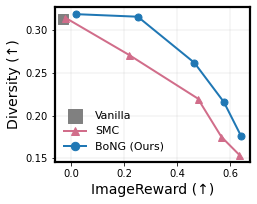}
\vspace{-2mm}

\refstepcounter{figure}
\parbox{\linewidth}{\small
Figure~\thefigure: Reward-Diversity trade-off of BoNG and SMC.} 
\label{Fig4}
\end{minipage}
\hfill%
\begin{minipage}[t]{0.66\textwidth}
\vspace{0pt}
\centering

\begin{minipage}[t]{0.485\linewidth}
    \centering
    \includegraphics[width=\linewidth]{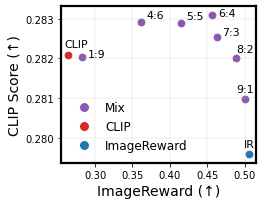}\\[-1mm]
    \makebox[\linewidth][c]{\small\textbf{(a)} Average}
\end{minipage}%
\hfill%
\begin{minipage}[t]{0.485\linewidth}
    \centering
    \includegraphics[width=\linewidth]{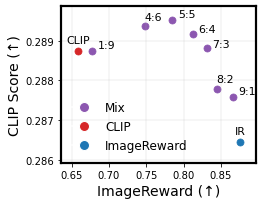}\\[-1mm]
    \makebox[\linewidth][c]{\small\textbf{(b)} BoN}
\end{minipage}

\vspace{1mm}
\refstepcounter{figure}
\parbox{\linewidth}{\small
Figure~\thefigure: Average and BoN scores over four particles under different reward mixing weights (IR : CLIP).}
\label{Fig5}
\end{minipage}%
\hspace{0.015\textwidth}%

\vspace{0mm}
\end{figure*}

\begin{figure*}[t!]
\centering
\begin{minipage}{\textwidth}
\centering
\refstepcounter{table}
\parbox{\linewidth}{\small
Table~\thetable: Guidance with HPSv2~\cite{wu2023human} as the reward model (SD v1.5, DDPM 100 steps, $N=4$).}
\label{tab:hpsv2}
\vspace{-2mm}

{\small
\setlength{\tabcolsep}{5pt}
\begin{tabular}{lcccccccc}
\toprule
& \multicolumn{4}{c}{\textbf{Mean} $(\uparrow)$} & \multicolumn{4}{c}{\textbf{BoN} $(\uparrow)$} \\
\cmidrule(lr){2-5}\cmidrule(l){6-9}
\textbf{Method} & HPSv2 & IR & CLIP & GenEval & HPSv2 & IR & CLIP & GenEval \\
\midrule
Vanilla & 0.263 & $-0.018$ & 0.272 & 0.427 & 0.287 & 0.488 & \textbf{0.282} & \textbf{0.517} \\
\rowcolor{gray!25}BoNG & \textbf{0.276} & \textbf{0.316} & \textbf{0.277} & \textbf{0.487} & \textbf{0.288} & \textbf{0.495} & \textbf{0.282} & 0.514 \\
\bottomrule
\end{tabular}}
\end{minipage}
\vspace{2mm}
\end{figure*}

\begin{figure*}[t!]
\centering
\begin{minipage}[t]{0.34\textwidth}
\vspace{0pt}
\centering
\includegraphics[width=\linewidth]{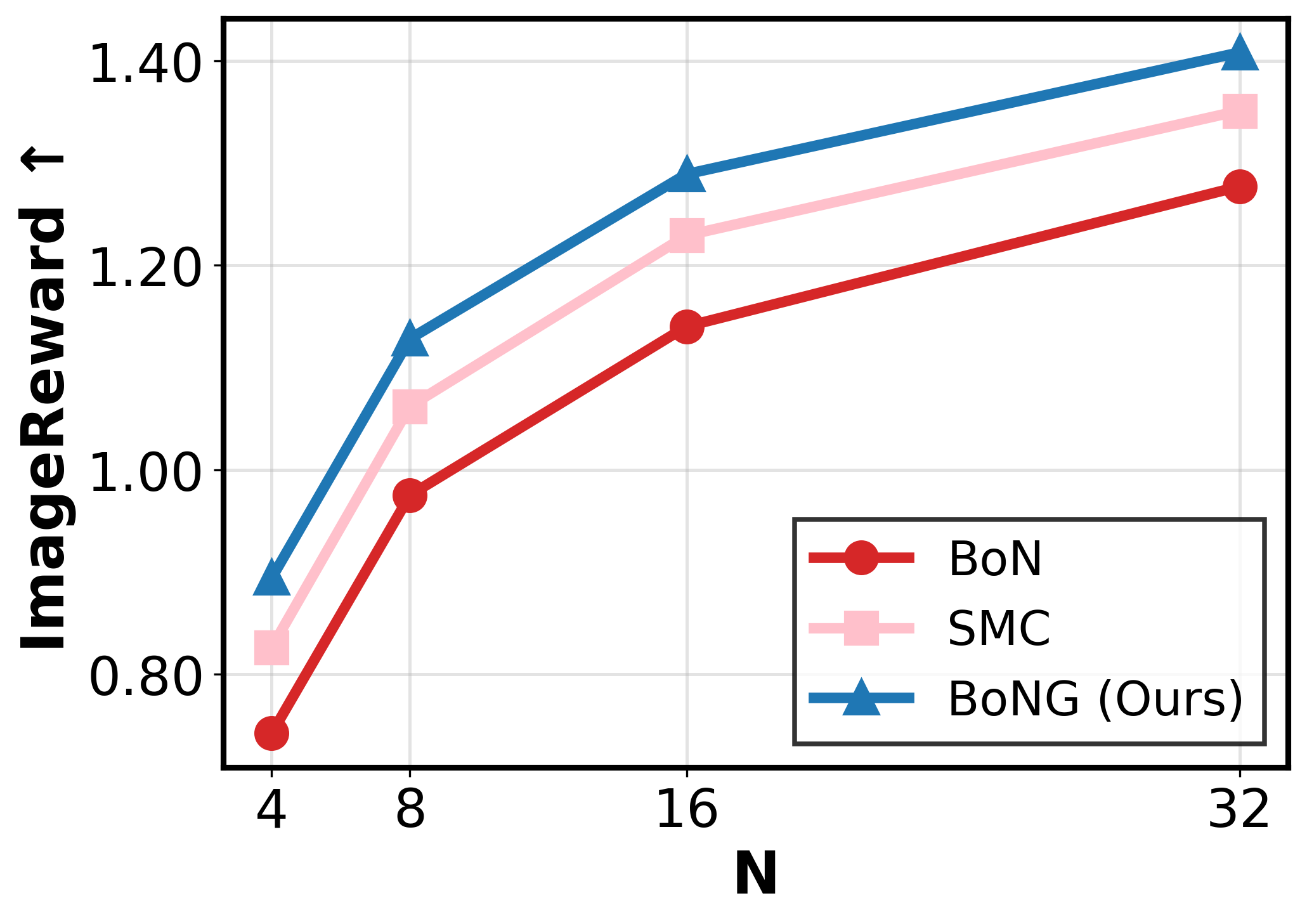}
\vspace{-2mm}

\refstepcounter{figure}
\parbox{\linewidth}{\small
Figure~\thefigure: BoN performance across large sample budget $N$ (SD v1.5, DDPM 100 steps).}
\label{Fig8}
\end{minipage}
\hfill%
\begin{minipage}[t]{0.62\textwidth}
\vspace{0pt}
\centering
\refstepcounter{table}
\parbox{\linewidth}{\small
Table~\thetable: Comparison with LiDAR under matched end-to-end wall-clock time (SD v1.5, DDPM 100 steps, $\delta=5$ for LiDAR; BoN performance). \textbf{Bold} entries indicate the best results.}
\label{tab:lidar_matched}
\vspace{1mm}

{\small
\setlength{\tabcolsep}{4pt}
\begin{tabular}{lcccc}
\toprule
& \multicolumn{3}{c}{\textbf{Performance} $(\uparrow)$} & \textbf{Cost} $(\downarrow)$ \\
\cmidrule(lr){2-4}\cmidrule(l){5-5}
\textbf{Method} & IR & CLIP & GenEval & Time{\scriptsize~(s)} \\
\midrule
LiDAR ($N=2$, $n=50$) & 0.734 & 0.284 & 0.515 & 10.59 \\
LiDAR ($N=3$, $n=25$) & 0.760 & 0.284 & 0.515 & 8.92 \\
LiDAR ($N=4$, $n=9$) & 0.866 & \textbf{0.286} & 0.549 & 8.24 \\
\rowcolor{gray!25}BoNG ($N=4$) & \textbf{0.894} & \textbf{0.286} & \textbf{0.555} & \textbf{8.06} \\
\bottomrule
\end{tabular}}
\end{minipage}
\vspace{1mm}
\end{figure*}

\begin{figure*}[t!]
\centering
\begin{minipage}{\textwidth}
\centering
\refstepcounter{table}
\parbox{\linewidth}{\small
Table~\thetable: BoN performance on FLUX.1-dev (12B)~\cite{flux2024} with the flow-matching Euler 28-step sampler.}
\label{tab:flux}
\vspace{-1mm}

{\small
\setlength{\tabcolsep}{5pt}
\begin{tabular}{lccccccccc}
\toprule
& \multicolumn{3}{c}{$N=2$} & \multicolumn{3}{c}{$N=3$} & \multicolumn{3}{c}{$N=4$} \\
\cmidrule(lr){2-4}\cmidrule(lr){5-7}\cmidrule(l){8-10}
\textbf{Method} & IR & CLIP & GenEval & IR & CLIP & GenEval & IR & CLIP & GenEval \\
\midrule
Vanilla & 1.220 & \textbf{0.285} & 0.671 & 1.303 & \textbf{0.287} & 0.687 & 1.353 & \textbf{0.287} & 0.696 \\
\rowcolor{gray!25}BoNG & \textbf{1.252} & \textbf{0.285} & \textbf{0.680} & \textbf{1.341} & \textbf{0.287} & \textbf{0.696} & \textbf{1.370} & 0.286 & \textbf{0.700} \\
\bottomrule
\end{tabular}}
\end{minipage}
\vspace{-2mm}
\end{figure*}

\subsection{Analysis}
\paragraph{Reward-Diversity Trade-off against SMC.}
\label{Sec:Analysis}
Similar to BoNG, SMC can also be adapted to the multi-output setting. SMC does it by tuning the reward-weighting parameter \(\lambda\) that determines particle potentials, whereas BoNG controls the guidance time window in \cref{Section: BoNG time window}. \cref{Fig4} shows that BoNG achieves a consistently better reward-diversity Pareto frontier than SMC. SMC incorporates reward information through resampling, which necessarily duplicates some particles and removes others, while BoNG incorporates reward information through guidance and thus modifies particle trajectories more gradually. Consequently, BoNG preserves diversity more effectively while maintaining strong reward improvement, giving it a clear advantage in the multi-output setting.

\paragraph{Efficiency–Performance comparison among gradient guidance methods.}
The guidance baselines in \cref{Table2} incur substantial additional computation: UG and DATE require backward passes through the score and reward models, and LiDAR requires additional lookahead rollouts with score-network evaluations (see \cref{app:cost} for the full cost breakdown). BoNG avoids both sources of overhead, yet achieves better average performance than LiDAR. We attribute this to the construction of the guidance signal: LiDAR relies on random lookahead samples, whereas BoNG steers all particles toward the current highest-reward denoising particle. \cref{Fig3} extends the efficiency--performance Pareto points of \cref{Table2}, showing how performance changes as more computation is used; BoNG achieves the steepest performance gain relative to the amount of computation.

\paragraph{Performance with different reward functions.}
\cref{Fig5} shows results when BoNG is guided by CLIP, ImageReward (IR), or their weighted combination. To account for the different reward scales, we Z-score normalize each reward before applying the mixing weights. As the CLIP weight increases, the generated samples achieve higher CLIP scores, whereas increasing the IR weight improves ImageReward scores. This trend shows that BoNG can effectively leverage other reward functions as guidance signals. Furthermore, combining IR and CLIP yields a clear trade-off curve between the two metrics. The mixed rewards produce frontier points, indicating that reward mixing can improve the balance across complementary evaluation metrics. These results suggest that BoNG provides a reward-agnostic guidance mechanism that can operate with different reward functions and can further benefit from combining multiple rewards. We further test HPSv2~\citep{wu2023human} as the guidance reward in \cref{tab:hpsv2}, where BoNG improves both the mean and the BoN scores over Vanilla, confirming that the guidance mechanism transfers to a different human-preference reward. Results with a heavier reward model are provided in \cref{app:hpsv3}.

\paragraph{Budget-matched comparison with LiDAR.}
\cref{Table1} compares methods whose end-to-end cost stays close to that of Vanilla BoN sampling. LiDAR~\citep{kim2026lookahead}, a representative sample-based guidance method, can also serve the single-output setting, but its cost is higher: independently of $N$, it must first generate $n$ lookahead samples in advance, each requiring $\delta$ score-network evaluations, and its performance is governed by $n$ (see \cref{app:cost} for the full cost breakdown). We therefore compare BoNG against LiDAR under matched end-to-end wall-clock time, trading off $N$ against $n$ for LiDAR. As shown in \cref{tab:lidar_matched}, BoNG achieves the best results on every metric with the shortest generation time. This is because BoNG constructs its guidance signal from the $N$ particles currently being sampled, whereas LiDAR relies on separately generated lookahead samples. Additional experiments are provided in \cref{app:lidar_matched}.

\paragraph{Scalability of BoNG.}
BoNG scales with both the sample budget and the backbone. \cref{Fig8} shows that on SD v1.5 (DDPM 100 steps), BoNG consistently outperforms SMC and Vanilla BoN for $N\in\{8,16,32\}$, and its BoN performance keeps improving with $N$, inheriting the scaling behavior of Best-of-$N$ sampling. BoNG also benefits from a stronger generative process: in \cref{Table1}, its performance increases as the backbone grows from SD v1.5 (0.9B) to SDXL (2.6B) and as the sampler becomes more accurate (DDIM 50 to DDPM 100 steps). This trend extends to flow-matching backbones, to which BoNG applies directly since it requires only the one-step denoised estimate in \cref{eq:one_step_denoised_sample}. On FLUX.1-dev (12B)~\citep{flux2024}, \cref{tab:flux} shows that BoNG continues to improve over Vanilla BoN at every $N$, confirming that its gains carry over to a larger and stronger backbone.


\renewcommand{\arraystretch}{1.0}
\setlength{\tabcolsep}{1.6mm}

\section{Conclusion}
\label{Sec.6}
We propose BoNG, a test-time alignment method that integrates Best-of-$N$ sampling directly into the reverse diffusion process of diffusion models. By performing online Best-of-$N$ selection among denoising particles, BoNG steers particles to high reward regions during generation rather than using rewards only for post-hoc selection. We further show that BoNG is theoretically connected to sample-based guidance for reward-tilted diffusion sampling, while avoiding the costly rollout procedure required by prior methods. Empirically, BoNG improves both single-output (BoN) performance and average multi-output quality, achieving strong alignment gains with minimal additional inference cost. These results demonstrate that BoNG is an efficient and flexible framework for test-time diffusion alignment.

\paragraph{Limitations and Future Work.}
\label{limitations}
BoNG assumes that the reward model faithfully reflects the target preference; an inaccurate reward model can steer the particles toward unwanted regions, a limitation shared by all reward-based alignment methods. As future work, the finite-$\lambda$ relaxation in \cref{app:finite_lambda}, which replaces the hard Best-of-$N$ selection with a soft-weighted posterior mean, opens a promising direction: exploring this relaxation in greater depth, for example by scaling $\lambda$ to control the strength of alignment, would be a meaningful extension of BoNG.

\begin{ack}
This work was supported by the IITP (Institute of Information \& Communications Technology Planning \& Evaluation)-ITRC (Information Technology Research Center) grant funded by the Korea government (Ministry of Science and ICT) (IITP-2026-RS-2024-00437268, 50\%), and by AI Technology Development for Commonsense Extraction, Reasoning, and Inference from Heterogeneous Data (IITP) funded by the Ministry of Science and ICT (RS-2022-II220077, 50\%).
\end{ack}

\bibliography{neurips_2026}
\bibliographystyle{plain}

\appendix

\newpage
\section{Proofs and Derivations}
\subsection{Derivation of Lemma~\ref{lem:fisher}}
\label{app:lem}
\lemfisher*
\begin{proof}    
We have a forward Gaussian Kernel and its Stein score as: 
\[
p_t(\rvx_t|\rvx_0)
=\frac{1}{\sigma_t\sqrt{2\pi}}
\exp\!\left(-\frac{(\rvx_t-\rvx_0)^2}{2\sigma_t^2}\right),
\qquad
\nabla_{\rvx_t}\log p_t(\rvx_t|\rvx_0)
=-\frac{\rvx_t-\rvx_0}{\sigma_t^2}.
\]
With Bayes' rule, we can rewrite expected future reward as follows:

\begin{align}
\mathbb{E}_{p_\theta(\rvx_0 \mid \rvx_t, \rvc)} \left[ \exp(\lambda r(\rvx_0, \rvc)) \right]
&= \int p_{\theta}(\rvx_0|\rvx_t,\rvc) \exp(\lambda r(\rvx_0, \rvc)) \mathrm{d}\rvx_0
\\
&=\int \frac{p_\theta(\rvx_0 \mid \rvc)\, p_t(\rvx_t \mid \rvx_0)}
        {\int p_\theta(\rvx_0 \mid \rvc)\, p_t(\rvx_t \mid \rvx_0)\, \mathrm{d}\rvx_0}  \exp(\lambda r(\rvx_0, \rvc))  \mathrm{d}\rvx_0
\\
&= \frac{\int p_\theta(\rvx_0 \mid \rvc)\, p_t(\rvx_t \mid \rvx_0)\exp(\lambda r(\rvx_0, \rvc))  \mathrm{d}\rvx_0}
        {\int p_\theta(\rvx_0 \mid \rvc)\, p_t(\rvx_t \mid \rvx_0)\, \mathrm{d}\rvx_0}  
\\
&= \frac{z_r(\rvx_t)}{z_0(\rvx_t)},
\end{align}
where 
\begin{align}
z_r(\rvx_t) 
&:= \int p_\theta(\rvx_0 \mid \rvc)\, p_t(\rvx_t \mid \rvx_0)\, \exp(\lambda r(\rvx_0, \rvc))\, \mathrm{d}\rvx_0
\\
z_0(\rvx_t) 
&:= \int p_\theta(\rvx_0 \mid \rvc)\, p_t(\rvx_t \mid \rvx_0)\, \mathrm{d}\rvx_0=p_{\theta}(\rvx_t|\rvc).
\end{align}

The gradient of the expected future reward $\nabla_{\rvx_t}r_t^{\lambda}(\rvx_t,\rvc)$ can be expressed as

\begin{align}
\nabla_{\rvx_t}r_t^{\lambda}(\rvx_t,\rvc) &=
\nabla_{\rvx_t} \log \mathbb{E}_{p_\theta(\rvx_0 \mid \rvx_t, \rvc)} \left[ \exp(\lambda r(\rvx_0, \rvc)) \right]
\\
&= \nabla_{\rvx_t} \log z_r(\rvx_t) - \nabla_{\rvx_t} \log z_0(\rvx_t).
\label{eq:prop_1_ref1}
\end{align}

The first term $\nabla_{\rvx_t} \log z_r(\rvx_t)$ can be expressed as follows:
\begin{align}
    \nabla_{\rvx_t} \log z_r(\rvx_t)&=\frac{\nabla_{\rvx_t}z_r(\rvx_t)}{z_r(\rvx_t)} 
    \\
    &=\int\frac{ p_{\theta}(\rvx_0\mid\rvc)\exp(\lambda r(\rvx_0,\rvc))\nabla_{\rvx_t}p_t(\rvx_t \mid \rvx_0)}{z_r(\rvx_t)}\mathrm{d}\rvx_0
    \\
    &=\int\frac{p_{\theta}(\rvx_0\mid\rvc)\exp(\lambda r(\rvx_0,\rvc))p_t(\rvx_t \mid \rvx_0)\frac{\nabla_{\rvx_t}p_t(\rvx_t \mid \rvx_0)}{p_t(\rvx_t \mid \rvx_0)}}{z_r(\rvx_t)}\mathrm{d}\rvx_0
    \\
    &=\int\frac{ p_{\theta}(\rvx_0\mid\rvc)p_t(\rvx_t \mid \rvx_0)}{z_0(\rvx_t)}\frac{z_0(\rvx_t)}{z_r(\rvx_t)}\exp(\lambda r(\rvx_0,\rvc))\nabla_{\rvx_t}\log p_t(\rvx_t \mid \rvx_0)\mathrm{d}\rvx_0
    \\
    &=\int \underbrace{\frac{p_{\theta}(\rvx_0\mid\rvx_t,\rvc)\exp(\lambda r(\rvx_0,\rvc))}{\mathbb{E}_{p_{\theta}(\rvx_0\mid\rvx_t,\rvc)}\left[\exp(\lambda r(\rvx_0,\rvc))\right]}}_{:=p^r_{\theta}(\rvx_0 \mid \rvx_t,\rvc) }\nabla_{\rvx_t}\log p_t(\rvx_t \mid \rvx_0) \mathrm{d}\rvx_0
    \\
    &=\mathbb{E}_{p^r_{\theta}(\rvx_0 \mid \rvx_t,\rvc)}\left[ \nabla_{\rvx_t}\log p_t(\rvx_t \mid \rvx_0)\right].
\end{align}
We can rewrite the second term $\nabla_{\rvx_t} \log z_0(\rvx_t)$ as follows:
\begin{align}
    \nabla_{\rvx_t} \log z_0(\rvx_t)&=\frac{\nabla_{\rvx_t}z_0(\rvx_t)}{z_0(\rvx_t)} 
    \\
    &=\int\frac{ p_{\theta}(\rvx_0\mid\rvc)\nabla_{\rvx_t}p_t(\rvx_t \mid \rvx_0)}{z_0(\rvx_t)}\mathrm{d}\rvx_0
    \\
    &=\int \frac{ p_{\theta}(\rvx_0\mid\rvc)p_t(\rvx_t \mid \rvx_0)\frac{\nabla_{\rvx_t}p_t(\rvx_t \mid \rvx_0)}{p_t(\rvx_t \mid \rvx_0)}}{z_0(\rvx_t)}\mathrm{d}\rvx_0
    \\
    &=\int \frac{ p_{\theta}(\rvx_0\mid\rvc)p_t(\rvx_t \mid \rvx_0)}{p_{\theta}(\rvx_t|\rvc)}\nabla_{\rvx_t}\log p_t(\rvx_t \mid \rvx_0)\mathrm{d}\rvx_0
    \\
    &=\mathbb{E}_{p_{\theta}(\rvx_0|\rvx_t,\rvc)}\left[\nabla_{\rvx_t}\log p_t(\rvx_t \mid \rvx_0)\right].
\end{align}

Substituting the two identities above into Eq.~\ref{eq:prop_1_ref1} and simplifying yields
\begin{align}
\nabla_{\rvx_t}r_t^{\lambda}(\rvx_t,\rvc)
&=\mathbb{E}_{p^r_{\theta}(\rvx_0|\rvx_t,\rvc)}\left[\nabla_{\rvx_t}\log p_t(\rvx_t \mid \rvx_0)\right] - \mathbb{E}_{p_{\theta}(\rvx_0|\rvx_t,\rvc)}\left[\nabla_{\rvx_t}\log p_t(\rvx_t \mid \rvx_0)\right]
\\
&=\mathbb{E}_{p^r_\theta(\rvx_0|\rvx_t,\rvc)}
\left[
-\frac{\rvx_t - \rvx_0}{\sigma_t^2}
\right] - \mathbb{E}_{p_\theta(\rvx_0|\rvx_t,\rvc)}
\left[
-\frac{\rvx_t - \rvx_0}{\sigma_t^2}
\right]
\\
&=-\frac{1}{\sigma_t^2}
\left(
\rvx_t - \mathbb{E}_{p^r_\theta(\rvx_0|\rvx_t,\rvc)}[\rvx_0]
\right) +\frac{1}{\sigma_t^2}
\left(
\rvx_t - \mathbb{E}_{p_\theta(\rvx_0|\rvx_t,\rvc)}[\rvx_0]
\right)
\\
&=
\frac{1}{\sigma_t^2}
\left(
\mathbb{E}_{p^r_\theta(\rvx_0|\rvx_t,\rvc)}[\rvx_0]
-
\mathbb{E}_{p_\theta(\rvx_0|\rvx_t,\rvc)}[\rvx_0]
\right)
\\
&=
\frac{1}{\sigma_t^2}
\left(
m_{\lambda}^{r}(\rvx_t)
-
\mathbb{E}_{p_\theta(\rvx_0|\rvx_t,\rvc)}[\rvx_0]
\right).
\end{align}
This completes the proof.
\end{proof}

\subsection{Derivation of Lemma~\ref{lemma1}}
\label{app:lemma1}
\lem*

\begin{proof}    
We derive the self-normalized importance sampling representation of the reward-tilted posterior mean. Specifically, we obtain
\begin{align}
m_{\lambda}^{r}(\rvx_t)
&=
\mathbb{E}_{p_{\theta}^{r}(\rvx_0 \mid \rvx_t,\rvc)}[\rvx_0]
=\frac{\int p_{\theta}(\rvx_0 \mid \rvx_t,\rvc)\exp (\lambda r(\rvx_0,\rvc))\rvx_0\mathrm{d}\rvx_0}{\mathbb{E}_{p_{\theta}(\rvx_0 \mid \rvx_t,\rvc)}\left[\exp (\lambda r(\rvx_0,\rvc))\right]}
\\
&=
\frac{
\mathbb{E}_{p_{\theta}(\rvx_0 \mid \rvx_t,\rvc)}
\left[
\rvx_0 \exp\!\bigl(\lambda r(\rvx_0,\rvc)\bigr)
\right]
}{
\mathbb{E}_{p_{\theta}(\rvx_0 \mid \rvx_t,\rvc)}
\left[
\exp\!\bigl(\lambda r(\rvx_0,\rvc)\bigr)
\right]
}.
\label{eq:appendix_rtpm_def}
\end{align}

Let \(q(\rvx_0)\) be any proposal distribution whose support contains that of \(p_{\theta}(\rvx_0 \mid \rvx_t,\rvc)\). By change of measure, both the numerator and denominator in \cref{eq:appendix_rtpm_def} can be rewritten under \(q(\rvx_0)\), yielding
\begin{align}
m_{\lambda}^{r}(\rvx_t)
=
\frac{
\mathbb{E}_{q(\rvx_0)}
\left[
\frac{p_{\theta}(\rvx_0 \mid \rvx_t,\rvc)}{q(\rvx_0)}
\rvx_0 \exp\!\bigl(\lambda r(\rvx_0,\rvc)\bigr)
\right]
}{
\mathbb{E}_{q(\rvx_0)}
\left[
\frac{p_{\theta}(\rvx_0 \mid \rvx_t,\rvc)}{q(\rvx_0)}
\exp\!\bigl(\lambda r(\rvx_0,\rvc)\bigr)
\right]
}.
\label{eq:appendix_rtpm_q}
\end{align}

Now let \(\hat{\rvx}_0^1,\dots,\hat{\rvx}_0^N \overset{i.i.d.}{\sim} q(\rvx_0)\). Applying Monte Carlo approximation to both expectations in \cref{eq:appendix_rtpm_q}, we obtain
\begin{align}
m_{\lambda}^{r}(\rvx_t)
&\approx
\frac{
\frac{1}{N}\sum_{i=1}^{N}
\frac{p_{\theta}(\hat{\rvx}_0^i \mid \rvx_t,\rvc)}{q(\hat{\rvx}_0^i)}
\hat{\rvx}_0^i
\exp\!\bigl(\lambda r(\hat{\rvx}_0^i,\rvc)\bigr)
}{
\frac{1}{N}\sum_{i=1}^{N}
\frac{p_{\theta}(\hat{\rvx}_0^i \mid \rvx_t,\rvc)}{q(\hat{\rvx}_0^i)}
\exp\!\bigl(\lambda r(\hat{\rvx}_0^i,\rvc)\bigr)
}
\\
&=
\frac{
\sum_{i=1}^{N}
\frac{p_{\theta}(\hat{\rvx}_0^i \mid \rvx_t,\rvc)}{q(\hat{\rvx}_0^i)}
\hat{\rvx}_0^i
\exp\!\bigl(\lambda r(\hat{\rvx}_0^i,\rvc)\bigr)
}{
\sum_{i=1}^{N}
\frac{p_{\theta}(\hat{\rvx}_0^i \mid \rvx_t,\rvc)}{q(\hat{\rvx}_0^i)}
\exp\!\bigl(\lambda r(\hat{\rvx}_0^i,\rvc)\bigr)
}.
\label{eq:appendix_mc_ratio}
\end{align}

Define the unnormalized importance weights
\begin{align}
w_i^{(\lambda)}
:=
\frac{p_{\theta}(\hat{\rvx}_0^i \mid \rvx_t,\rvc)}{q(\hat{\rvx}_0^i)}
\exp\!\bigl(\lambda r(\hat{\rvx}_0^i,\rvc)\bigr).
\end{align}
Then \cref{eq:appendix_mc_ratio} can be written as
\begin{align}
m_{\lambda}^{r}(\rvx_t)
\approx
\sum_{i=1}^{N}
\frac{w_i^{(\lambda)}}{\sum_{j=1}^{N} w_j^{(\lambda)}}
\hat{\rvx}_0^i.
\end{align}
Therefore, if we define
\begin{align}
\alpha_i^{(\lambda)}
:=
\frac{w_i^{(\lambda)}}{\sum_{j=1}^{N} w_j^{(\lambda)}},
\end{align}
we obtain the self-normalized importance sampling estimator
\begin{align}
\hat m_{N,\lambda}^{r}(\rvx_t)
:=
\sum_{i=1}^{N} \alpha_i^{(\lambda)} \hat{\rvx}_0^i.
\end{align}

Finally, taking logarithms of the unnormalized weights gives
\begin{align}
\log w_i^{(\lambda)}
=
\log p_{\theta}(\hat{\rvx}_0^i \mid \rvx_t,\rvc)
-
\log q(\hat{\rvx}_0^i)
+
\lambda r(\hat{\rvx}_0^i,\rvc).
\end{align}
Hence the normalized weights admit the softmax form
\begin{align}
\alpha_i^{(\lambda)}
=
\operatorname{softmax}_i\!\Bigl(
\log p_{\theta}(\hat{\rvx}_0^i \mid \rvx_t,\rvc)
-
\log q(\hat{\rvx}_0^i)
+
\lambda r(\hat{\rvx}_0^i,\rvc)
\Bigr),
\end{align}
which proves the result.
\end{proof}

\subsection{Derivation of Theorem~\ref{theorem}}
\label{app:theorem}
\thm*

\begin{proof}
By \cref{eq:mean_discrepancy}, the target process score in \cref{eq:process_target} depends on the reward-tilted posterior mean \(m_{\lambda}^{r}(\rvx_t)\). By \cref{lemma1}, this posterior mean admits the self-normalized importance sampling approximation
\[
m_{\lambda}^{r}(\rvx_t)\approx \hat m_{N,\lambda}^{r}(\rvx_t)
=
\sum_{i=1}^{N}\alpha_i^{(\lambda)}\hat{\rvx}_0^i,
\]
where
\[
\alpha_i^{(\lambda)}
=
\operatorname{softmax}_i\!\Bigl(
\log p_\theta(\hat{\rvx}_0^i \mid \rvx_t,\rvc)
-
\log q(\hat{\rvx}_0^i)
+
\lambda r(\hat{\rvx}_0^i,\rvc)
\Bigr).
\]

Let
\[
i^*=\arg\max_{1\le i\le N} r(\hat{\rvx}_0^i,\rvc),
\qquad
\hat{\rvx}_0^{\mathrm{BoN}}:=\hat{\rvx}_0^{i^*},
\]
which is unique by assumption. Then for any \(i\neq i^*\),
\begin{align}
\frac{\alpha_i^{(\lambda)}}{\alpha_{i^*}^{(\lambda)}}
&=
\exp\Bigl(
\log p_\theta(\hat{\rvx}_0^i \mid \rvx_t,\rvc)
-
\log q(\hat{\rvx}_0^i)
+
\lambda r(\hat{\rvx}_0^i,\rvc)
\Bigr) \nonumber
\\
&\qquad\times
\exp\Bigl(
-\log p_\theta(\hat{\rvx}_0^{i^*} \mid \rvx_t,\rvc)
+
\log q(\hat{\rvx}_0^{i^*})
-
\lambda r(\hat{\rvx}_0^{i^*},\rvc)
\Bigr) \nonumber
\\
&=
C_i\exp\Bigl(
\lambda\bigl(r(\hat{\rvx}_0^i,\rvc)-r(\hat{\rvx}_0^{i^*},\rvc)\bigr)
\Bigr),
\end{align}
where
\[
C_i
:=
\exp\Bigl(
\log p_\theta(\hat{\rvx}_0^i \mid \rvx_t,\rvc)
-
\log q(\hat{\rvx}_0^i)
-
\log p_\theta(\hat{\rvx}_0^{i^*} \mid \rvx_t,\rvc)
+
\log q(\hat{\rvx}_0^{i^*})
\Bigr)
\]
is independent of \(\lambda\). Since \(i^*\) is the unique maximizer, we have
\[
r(\hat{\rvx}_0^i,\rvc)-r(\hat{\rvx}_0^{i^*},\rvc)<0
\qquad
\text{for all } i\neq i^*.
\]

Therefore,
\[
\frac{\alpha_i^{(\lambda)}}{\alpha_{i^*}^{(\lambda)}}\xrightarrow[\lambda\to\infty]{}0
\qquad
\text{for all } i\neq i^*.
\]
Since \(\sum_{i=1}^{N}\alpha_i^{(\lambda)}=1\), it follows that
\[
\alpha_i^{(\lambda)}\xrightarrow[\lambda\to\infty]{}0
\quad (i\neq i^*),
\qquad
\alpha_{i^*}^{(\lambda)}\xrightarrow[\lambda\to\infty]{}1.
\]
Hence,
\begin{align}
\hat m_{N,\lambda}^{r}(\rvx_t)
=
\sum_{i=1}^{N}\alpha_i^{(\lambda)}\hat{\rvx}_0^i
\xrightarrow[\lambda\to\infty]{}
\hat{\rvx}_0^{i^*}
=
\hat{\rvx}_0^{\mathrm{BoN}}.
\label{eq:appendix_theorem3_collapse}
\end{align}

Now evaluate at a query particle \(\rvx_t=\rvx_t^i\). By the proposal construction in \cref{eq:one_step_denoised_sample,eq:bon_select},
\[
\mathbb{E}_{p_\theta(\rvx_0\mid \rvx_t^i,\rvc)}[\rvx_0]=\hat{\rvx}_0^i,
\qquad
\hat{\rvx}_0^{\mathrm{BoN}}=\hat{\rvx}_0^{i^*}.
\]
Substituting the SNIS approximation into \cref{eq:mean_discrepancy} evaluated at \(\rvx_t^i\), we obtain
\begin{align}
\nabla_{\rvx_t^i}r_t^{\lambda}(\rvx_t^i,\rvc)
\approx
\frac{1}{\sigma_t^2}
\bigl(
\hat m_{N,\lambda}^{r}(\rvx_t^i)-\hat{\rvx}_0^i
\bigr).
\label{eq:appendix_theorem3_guidance}
\end{align}
By \cref{eq:appendix_theorem3_collapse}, the right-hand side converges as \(\lambda\to\infty\) to
\[
\frac{1}{\sigma_t^2}
\bigl(
\hat{\rvx}_0^{\mathrm{BoN}}-\hat{\rvx}_0^i
\bigr)
=
\nabla_{\rvx_t^i} r_t^{\mathrm{BoN}}(\rvx_t^i,\rvc),
\]
which is the BoN guidance term in \cref{eq:bong_score_def}. This establishes \cref{eq:thm_limit_grad} and proves the theorem.
\end{proof}

\section{Experimental Settings}
\label{app:B}

\subsection{Experiment Setup}
\label{sec:B.1}
We use SD v1.5~\citep{rombach2022high} and SDXL~\citep{podell2024sdxl} backbones with CFG~\citep{ho2021classifierfree} scale 7.5 for all experiments. For each backbone, we experiment with the DDPM 100-step sampler and the DDIM 50-step sampler (probability-flow ODE) in \cref{Table1}, the latter being substantially faster. We additionally evaluate on the flow-matching backbone FLUX.1-dev~\citep{flux2024}, using the flow-matching Euler sampler with 28 steps and guidance scale 3.5. All results are averaged over three random seeds, except FLUX.1-dev, which uses a single seed due to computational cost.

\subsection{BoNG Configuration}
\label{Sec:B.2}
BoNG has three hyperparameters: the window position $t_{\mathrm{start}}$, the window size $K$, and the guidance scale $s$. The selection protocol and the corresponding sensitivity analyses are given in \cref{app:protocol,app:C}; here we record the values used in the main experiments.

For a compute budget comparable to the SMC baseline~\citep{singhal2025a}, which performs five resampling steps, we set the number of guided transitions to $K\le4$ in all configurations of \cref{Table1}. A guidance window $[t_{\mathrm{start}},t_{\mathrm{end}}]$ denotes the set of guided transitions $t\to t+1$ for $t_{\mathrm{start}}\le t<t_{\mathrm{end}}$; e.g., $[40,44]$ denotes the four transitions $40\to41,\dots,43\to44$ ($K=4$). The guidance windows are $[40,44]$ for SD v1.5 with DDPM 100 steps (where step 40 corresponds to $t=0.6$, with $t=1$ the prior and $t=0$ the data), $[21,23]$ for SD v1.5 with DDIM 50 steps, $[24,28]$ for SDXL with DDPM 100 steps, $[20,22]$ for SDXL with DDIM 50 steps, $[12,14]$ for FLUX with Euler 28 steps. The guidance scale is $s=6$ for SD v1.5, $s=4$ for SDXL, $s=2$ for FLUX. \cref{Table2} uses the same configuration as \cref{Table1}.

\subsection{Baseline Configuration}
\label{sec:B.3}
We apply the shared settings of \cref{sec:B.1} to all baselines. In \cref{Table1}, Vanilla runs standard diffusion sampling $N$ times and selects the best sample according to ImageReward~\citep{xu2023imagereward}. The baseline denoted SMC is FK-Steering~\citep{singhal2025a} under its recommended text-to-image configuration: max potential, $\lambda=10$, five evenly spaced resampling steps, adaptive resampling, and intermediate rewards on denoised estimates. Best-$k$-of-$N$ involves no guidance: $N=5$ samples are generated with the base model, and the top $k=4$ samples by reward are returned.

For \cref{Table2}, we follow the standard practice of quoting the reported numbers for LiDAR~\citep{kim2026lookahead}, UG~\citep{bansal2024universal}, and DATE~\citep{na2025diffusion} from the concurrent work LiDAR, which achieves state-of-the-art performance in the multi-output setting. Please refer to Sections B.1 and B.2 of LiDAR for the corresponding configurations. For transparency, we additionally reproduce all baselines of \cref{Table2} in our unified codebase under the same configurations and report the results in \cref{app:reproduced_table2}.

\subsection{Evaluation Metrics}
\label{sec:B.4}
We report the metrics used throughout our experiments.

\noindent\textbf{ImageReward}~\citep{xu2023imagereward}:
a learned human-preference reward model for text-to-image generation. It assigns a scalar preference score to each generated image conditioned on the text prompt, and we use it to evaluate overall image quality and prompt satisfaction from a human-centric perspective.

\noindent\textbf{CLIP Score}~\citep{hessel2021clipscore}:
measures the semantic alignment between the text prompt and the generated image by computing the similarity between their CLIP embeddings. It provides a reference-free estimate of prompt-image correspondence, although it may be less sensitive to fine-grained compositional errors.

\noindent\textbf{GenEval}~\citep{ghosh2023geneval}:
an object-centric evaluation benchmark for text-to-image generation. It uses detection-based checks to assess whether generated images correctly satisfy prompt-specified entities, attributes, counts, and spatial relations, making it useful for measuring compositional accuracy.

We measure the diversity of generated samples by computing pairwise distances in the CLIP image-embedding space, following the evaluation protocol of FK-Steering.\footnote{\url{https://github.com/zacharyhorvitz/D-Diffusion-Steering/blob/main/text_to_image/fkd_diffusers/rewards.py}}

\section{Hyperparameter Selection Protocol of BoNG}
\label{app:protocol}

BoNG has three hyperparameters: the window position $t_{\mathrm{start}}$, the window size $K$, and the guidance scale $s$. For a new backbone or sampler, we set them by the following protocol:
\begin{itemize}
    \item \textbf{Window position} $t_{\mathrm{start}}$: place the window where the correlation between the Tweedie-estimate reward $r(\hat{\rvx}_0)$ and the final reward becomes reliable ($\approx0.6$).
    \item \textbf{Window size} $K$: choose $K$ by the compute budget, at or below the resampling budget of the SMC baseline.
    \item \textbf{Guidance scale} $s$: tune $s\in[1,6]$ to select the desired point on the reward--diversity frontier, at no extra compute.
\end{itemize}
This procedure was followed for every configuration in the paper. The remainder of this section justifies each rule with controlled experiments: \cref{app:protocol_tstart} shows why the window position trades off the accuracy of the guidance signal against the steerability of the samples, \cref{app:protocol_k} shows that the window size moves gradually and monotonically along the reward--diversity frontier, and \cref{app:protocol_s} shows that the guidance scale sweeps the same frontier at zero additional cost.

\subsection{Window Position}
\label{app:protocol_tstart}
A meaningful guidance signal requires the one-step denoised estimate $\hat{\rvx}_0^{\mathrm{BoN}}$ to be aligned with the final reward, so we place the window near the point where the correlation between $r(\hat{\rvx}_0)$ and the final reward exceeds $0.6$ (cf. Figure 5 of FK-Steering~\citep{singhal2025a}). Placing the window too late is ineffective for the opposite reason: by then the high-level features of each sample are already determined, so particles committed to low-reward regions can no longer be steered.

\begin{table}[H]
\small
\centering
\caption{Ablation over the window position $t_{\mathrm{start}}$ (SD v1.5, DDPM 100 steps, $N=4$, $K=4$, $s=6$; BoN performance).}
\vspace{1.5mm}
\label{tab:tstart_ablation}
\begin{tabular}{lccc}
\toprule
\textbf{Window} & \textbf{IR} $(\uparrow)$ & \textbf{CLIP} $(\uparrow)$ & \textbf{GenEval} $(\uparrow)$ \\
\midrule
Vanilla   & 0.713 & 0.284 & 0.532 \\
$[20,24]$ & 0.753 & 0.282 & 0.523 \\
\rowcolor{gray!25}$[40,44]$ & \textbf{0.875} & 0.286 & \textbf{0.559} \\
$[60,64]$ & 0.849 & \textbf{0.287} & 0.526 \\
$[80,84]$ & 0.866 & \textbf{0.287} & 0.528 \\
\bottomrule
\end{tabular}
\end{table}

\cref{tab:tstart_ablation} reports an ablation over $t_{\mathrm{start}}$ with all other factors fixed (SD v1.5, DDPM 100 steps, $N=4$, $K=4$, $s=6$, single seed; BoN performance). The metric-wise breakdown reveals why the optimum sits mid-range: window placement trades off the \textit{accuracy} of the guidance signal against the \textit{steerability} of the samples. An early window ($[20,24]$) fails on both metrics because the Tweedie-reward correlation is still low. In a late window ($[80,84]$), the correlation is at its highest and ImageReward improves accordingly; however, the high-level compositional content that GenEval measures is already determined by that stage, so accurate guidance can refine samples but can no longer re-steer particles committed to compositionally weaker configurations, and GenEval does not improve over Vanilla. The mid-range window $[40,44]$, where the correlation first becomes reliable while the high-level content can still be shaped, is the only regime that improves all metrics simultaneously, which is precisely what the window-position rule above is designed to find.

\subsection{Window Size}
\label{app:protocol_k}
The window size $K$ determines the additional cost of BoNG, which scales linearly as $NK(D+R)$ (see \cref{app:cost}). Enlarging $K$ applies the reward signal over more transitions and correspondingly reduces the influence of the base diffusion process, concentrating the particles in high-reward regions; in this sense, $K$ plays a role analogous to the reward-strength hyperparameter $\lambda$ in prior methods~\citep{bansal2024universal,na2025diffusion,li2025derivativefree}. \cref{tab:k_scan} reports a controlled scan over $K$ (SD v1.5, DDPM 100 steps, $s=1$, $t_{\mathrm{start}}=40$). Both effects are gradual and monotone: reward increases and diversity decreases as $K$ grows. At the main-experiment setting $K=4$, the reward improves over Vanilla sampling with no loss of diversity (0.319 vs.\ 0.313), indicating that the main experiments operate in a regime free of particle collapse. To keep the compute budget below that of the SMC baseline, we set $K=4$ for 100-step sampling and $K=2$ for 50-step sampling.

\begin{table}[H]
\small
\centering
\caption{Controlled scan over the window size $K$ (SD v1.5, DDPM 100 steps, $N=4$, $s=1$, $t_{\mathrm{start}}=40$).}
\vspace{1.5mm}
\label{tab:k_scan}
\begin{tabular}{lccc}
\toprule
\textbf{Window} & $K$ & \textbf{mean IR} $(\uparrow)$ & \textbf{Diversity} $(\uparrow)$ \\
\midrule
Vanilla   & --   & $-0.001$ & 0.313 \\
\rowcolor{gray!25}$[40,44]$ & 4    & 0.019 & 0.319 \\
$[40,50]$ & 10   & 0.252 & 0.316 \\
$[40,55]$ & 15   & 0.465 & 0.262 \\
$[40,60]$ & 20   & 0.576 & 0.216 \\
$[40,65]$ & 25   & 0.641 & 0.176 \\
$[40,70]$ & 30   & 0.678 & 0.146 \\
$[40,75]$ & 35   & 0.707 & 0.121 \\
\bottomrule
\end{tabular}
\end{table}

\subsection{Guidance Scale}
\label{app:protocol_s}
CFG~\citep{ho2021classifierfree} improves image fidelity without increasing inference time or the number of function evaluations (NFEs) by introducing a guidance scale $\omega\in\mathbb{R}$ with $\omega\geq 1$ into the conditional score function, as defined in \cref{eq:cfg}. In the same spirit, following prior guidance methods that expose an analogous scale~\citep{bansal2024universal,kim2026lookahead}, we introduce a guidance scale $s\in\mathbb{R}$ with $s\geq1$ on the BoN guidance term, which controls how strongly each non-BoN particle (\(\rvx_t^i\) with \(i\neq i^*\)) is steered, as described in \cref{eq:guidance_scale}.

\begin{align}
    \nabla_{\rvx_t}\log{p_\theta^\omega(\rvx_t|\rvc)}=\nabla_{\rvx_t}\log{p_\theta(\rvx_t)}+\omega\times\frac{\mathbb{E}_{p_\theta(\rvx_0|\rvx_t,\rvc)}[\rvx_0]-\mathbb{E}_{p_\theta(\rvx_0|\rvx_t)}[\rvx_0]}{\sigma_t^2}
    \label{eq:cfg}
\end{align}

\begin{align}
    \nabla_{\rvx_t}\log{p_\theta^{\mathrm{BoN}}(\rvx_t|\rvc)}
    =
    \nabla_{\rvx_t}\log{p_\theta(\rvx_t|\rvc)}
    +s\times
    \frac{
    \mathbb{E}_{p_\theta(\rvx_0|\rvx_t^{\mathrm{BoN}},\rvc)}[\rvx_0]
    -
    \mathbb{E}_{p_\theta(\rvx_0|\rvx_t,\rvc)}[\rvx_0]
    }{\sigma_t^2}
    \label{eq:guidance_scale}
\end{align}

Within the protocol, the scale $s$ complements $K$: since it only rescales the guidance term, it sweeps the same reward--diversity frontier at zero additional compute. \cref{tab:s_scan} reports a scan at the fixed main-experiment window $[40,44]$ ($K=4$): increasing $s$ raises the mean reward while diversity decreases gradually, so $s$ selects the operating point on the frontier once $t_{\mathrm{start}}$ and $K$ are fixed.

\begin{table}[H]
\small
\centering
\caption{Scan over the guidance scale $s$ (SD v1.5, DDPM 100 steps, $N=4$, window $[40,44]$, $K=4$).}
\vspace{1.5mm}
\label{tab:s_scan}
\begin{tabular}{lcc}
\toprule
$s$ & \textbf{mean IR} $(\uparrow)$ & \textbf{Diversity} $(\uparrow)$ \\
\midrule
Vanilla & $-0.001$ & 0.313 \\
1 & 0.019 & 0.319 \\
2 & 0.224 & 0.315 \\
3 & 0.490 & 0.255 \\
4 & 0.473 & 0.245 \\
\rowcolor{gray!25}6 & 0.506 & 0.224 \\
\bottomrule
\end{tabular}
\end{table}

We further compare the sensitivity of BoNG to $s$ against UG and LiDAR, which expose the same hyperparameter, on GenEval~\citep{ghosh2023geneval} test prompts. \cref{Fig6} reports the ImageReward--CLIP score trade-off as $s$ varies, and we make three observations. First, in the range $s\in[1,7.5]$, UG exhibits reward-hacking behavior: as $s$ increases, ImageReward improves while the CLIP score decreases. In contrast, LiDAR and BoNG achieve more favorable trade-offs across the two metrics. Second, BoNG achieves a stronger ImageReward--CLIP score trade-off than LiDAR. Finally, when $s=10$, we observe a degradation of BoNG in CLIP score due to issues such as off-manifold steering and color saturation, consistent with the findings of CFG++~\citep{chung2024cfg++} and APG~\citep{sadat2024eliminating}; this motivates the protocol range $s\in[1,6]$. Qualitative results are shown in \cref{Fig7}, where we present generations for five randomly selected prompts as $s$ varies from $1$ to $10$.

\begin{figure}[h]
    \centering
    \includegraphics[width=0.6\linewidth]{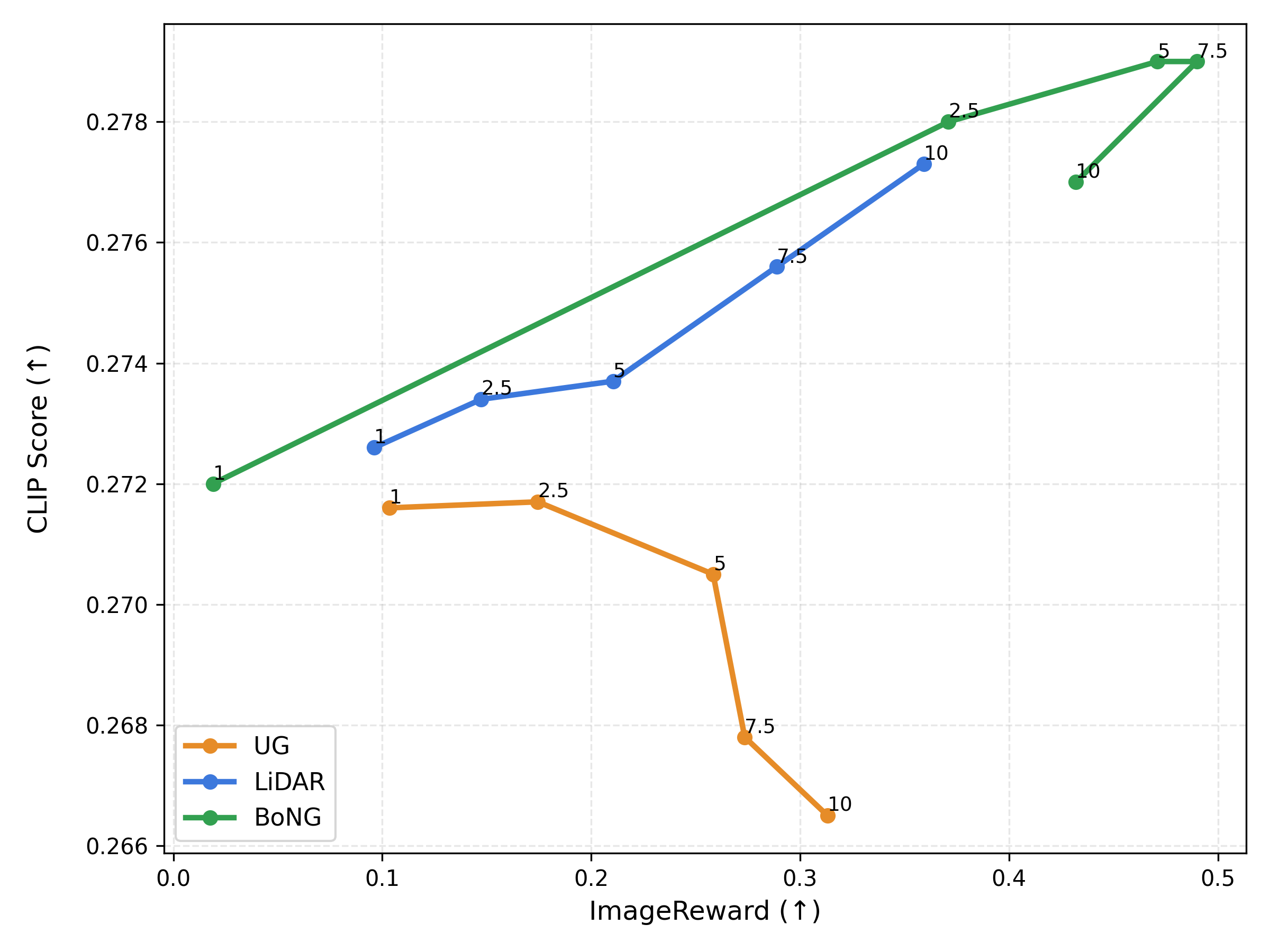}
    \caption{
    ImageReward~\citep{xu2023imagereward} and CLIP score~\citep{hessel2021clipscore} trade-offs between UG, LiDAR, and BoNG. Across all methods, we report results under guidance scale $s\in\{1,2.5,5,7.5,10\}$.}
    \label{Fig6}
\end{figure}

\begin{figure}[!t]
    \centering
    \begin{tikzpicture}
        \node[anchor=south west, inner sep=0] (img) at (0,0)
        {\includegraphics[width=\linewidth]{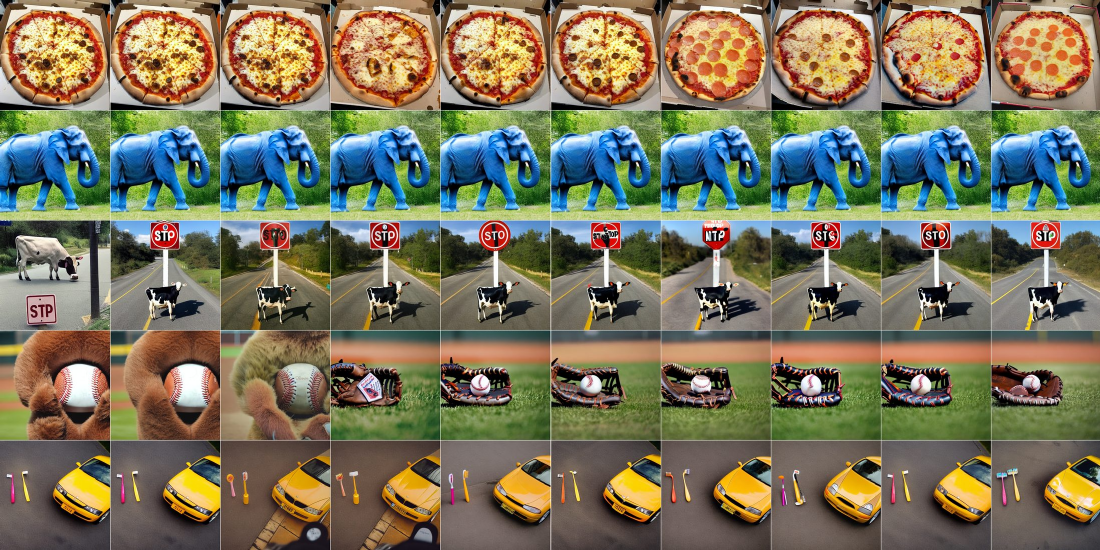}};
        
        \draw[->, line width=1.2pt]
            ([yshift=0.25cm]img.north west) -- ([yshift=0.25cm]img.north east);

        \node[anchor=south] at ([yshift=0.32cm]img.north)
            {\small Guidance scale};
    \end{tikzpicture}
    \caption{
    Changes in Best-of-$N$ samples of BoNG as the guidance scale $s\in\{1,2,3,4,5,6,7,8,9,10\}$ varies.
    The prompts from the top row to the bottom row are ``\textit{a photo of a pizza}'', ``\textit{a photo of a blue elephant}'', ``\textit{a photo of a cow left of a stop sign}'', ``\textit{a photo of a baseball glove right of a bear}'', and ``\textit{a photo of a yellow car and an orange toothbrush}''.
    }
    \label{Fig7}
\end{figure}

\section{Additional Analysis of BoNG}
\label{app:C}

\subsection{Effect of Guidance Timestep Window}
\label{app:window_effect}
The guidance timestep window jointly determines reward optimization, diversity preservation, and inference-time overhead. As discussed in \cref{Section: BoNG time window,Sec:Analysis}, the window plays the role of $\lambda$ in the reward-tilted distribution: applying BoNG over more timesteps yields stronger reward optimization, but may reduce sample diversity by making the generation process closer to sampling from a highly reward-tilted distribution. At the same time, the window size sets the additional cost $NK(D+R)$ of BoNG (\cref{app:cost}). \cref{Fig4} shows the resulting reward--diversity trade-off, \cref{tab:k_scan} confirms that both effects are gradual and monotone, and \cref{Fig3} shows that BoNG achieves the best performance--efficiency trade-off among the compared methods; the selection rules for the window are given in \cref{app:protocol}.

\subsection{Computational Cost}
\label{app:cost}
BoNG introduces no extra score network evaluations over Vanilla diffusion sampling. Its additional cost comes only from constructing the BoN target at guided timesteps, which requires a VAE decoder pass and a reward model pass for each denoising particle. For $N$ particles and $K$ guided timesteps, letting $D$ and $R$ denote the cost of one VAE decoder and reward model forward pass, respectively, the additional function evaluations are $NK(D+R)$.

\cref{tab:nfe} compares each method's additional cost over Vanilla sampling in hardware-independent units. SMC and BoNG add no additional score-network evaluations; their overhead comes solely from scoring denoised estimates, $NS(D+R)$ and $NK(D+R)$, respectively. UG and DATE evaluate the decoder and reward at every step, $NT(D+R)$, and additionally require backward passes through the score network, decoder, and reward model to obtain reward gradients. LiDAR generates lookahead samples: each of the $n$ lookaheads costs $\delta$ score-network evaluations plus one decoder and one reward pass, adding $n\delta + n(D+R)$ in total. Since $K\le S\ll T$ in all our experiments ($K=4$, $S=5$), BoNG incurs the smallest additional cost among the steering methods, with no gradient computation.

\begin{table}[H]
\small
\centering
\caption{Additional cost over Vanilla sampling in hardware-independent units. $D$: VAE-decoder pass, $R$: reward pass; $T$: diffusion steps; $S$: resampling steps of SMC; $n$, $\delta$: lookahead samples and score evaluations per lookahead of LiDAR.}
\vspace{1.5mm}
\label{tab:nfe}
\setlength{\tabcolsep}{5pt}
\begin{tabular}{lcccc}
\toprule
\textbf{Method} & \textbf{Score-net} & \textbf{VAE decoder} & \textbf{Reward} & \textbf{Backward} \\
\midrule
SMC & -- & $N\cdot S$ & $N\cdot S$ & -- \\
UG / DATE & -- & $N\cdot T$ & $N\cdot T$ & required \\
LiDAR & $n\cdot\delta$ & $n$ & $n$ & -- \\
\rowcolor{gray!25}BoNG & -- & $N\cdot K$ & $N\cdot K$ & -- \\
\bottomrule
\end{tabular}
\end{table}

To make the comparisons concrete, \cref{tab:wallclock} reports the wall-clock breakdown of one guided denoising step on SD v1.5 ($N=16$, mean of 5 runs).

\begin{table}[H]
\small
\centering
\caption{Wall-clock breakdown of one guided denoising step (SD v1.5, $N=16$, mean of 5 runs).}
\vspace{1.5mm}
\label{tab:wallclock}
\begin{tabular}{lccccc}
\toprule
\textbf{Component} & \textbf{Time (ms)} & \textbf{Per particle (ms)} & \textbf{Time \%} & \textbf{TFLOPs} & \textbf{TFLOPs / particle} \\
\midrule
UNet fwd + CFG & 238.88 & 14.93 & 20.6\% & 25.705 & 1.607 \\
VAE decode & 522.98 & 32.69 & 45.1\% & 40.232 & 2.515 \\
Postprocess (tensor$\to$PIL) & 167.94 & 10.50 & 14.5\% & 0.000 & 0.000 \\
Reward (ImageReward) & 230.81 & 14.43 & 19.9\% & 2.205 & 0.138 \\
\bottomrule
\end{tabular}
\end{table}

Regarding memory, we report peak GPU memory during generation in GiB in \cref{Table2}. All methods share an identical configuration (backbone, resolution, $N=4$, GPU), so the differences do not stem from image size or encoding. Notably, sample count alone does not predict the measurements: UG and DATE store no extra samples yet require roughly $3\times$ the memory of Vanilla, while LiDAR stores $n=50$ lookahead latents yet matches Vanilla's footprint. We instead characterize the measurements through the memory decomposition itself. With the same networks resident across methods, peak memory is approximately $M \approx P_{\mathrm{score}} + P_{\mathrm{dec}} + P_{\mathrm{reward}} + N\cdot A$, where only the activation term $A$ differs: gradient guidance (UG, DATE) retains the activation graph for backpropagation ($A_{\mathrm{bwd}}$), whereas Vanilla, LiDAR, and BoNG discard activations layer by layer ($A_{\mathrm{fwd}}\ll A_{\mathrm{bwd}}$). BoNG retains no graph and allocates nothing beyond the $N$ particles of Vanilla sampling, so its memory footprint equals Vanilla's by construction, independently of hardware.

\subsection{Anchor-Switching Ablation}
\label{app:anchor}
To test whether a single particle dominates the guidance signal, we tracked the identity of the Best-of-$N$ anchor $\rvx_t^{\mathrm{BoN}}$ at every guided transition (window $[40,44]$: three consecutive transitions per prompt) across all 553 GenEval prompts (total of 1,659 transitions) across 3 runs. We report the mean, median, and standard deviation of the switch rate of the 3 runs. Switch rate $1.0$ implies that for every transition $\rvx_t^{\mathrm{BoN}}$ was changed, and $0$ implies that $\rvx_t^{\mathrm{BoN}}$ is not changed at all. The experiment shows that the anchor is dynamically re-elected from the population rather than fixed: no single trajectory dominates the guidance target, and the guidance signal genuinely emerges from particle interaction.

\begin{table}[H]
\small
\centering
\caption{Anchor-switching statistics over 553 GenEval prompts (1,659 transitions), 3 runs.}
\vspace{1.5mm}
\label{tab:anchor_switch}
\begin{tabular}{lc}
\toprule
\textbf{Anchor switching (553 prompts, 1,659 transitions)} & \textbf{Value} \\
\midrule
Mean & 0.2656 \\
Median & 0.2737 \\
Std & 0.0161 \\
\bottomrule
\end{tabular}
\end{table}

\section{Finite-\texorpdfstring{$\lambda$}{lambda} Soft Relaxation of BoNG}
\label{app:finite_lambda}

BoNG applies hard Best-of-$N$ selection over the one-step denoised candidates and therefore corresponds to the large-$\lambda$ limit described in \cref{theorem}. However, \cref{lemma1} itself holds for every finite $\lambda>0$, and the same rollout-free one-step Tweedie construction admits a natural finite-$\lambda$ relaxation, which we derive and evaluate in this section. We emphasize that this relaxation is a variant beyond the hard-selection algorithm evaluated in the main experiments; all results in the main text correspond to the hard-selection BoNG.

\paragraph{Derivation.}
Choosing the proposal distribution as $q(\rvx_0)=p_\theta(\rvx_0\mid\rvc)$, the SNIS weights of \cref{lemma1} for a query particle $\rvx_t^i$ can be written, using Bayes' rule, as
\begin{align}
\mathrm{softmax}_j\Bigl(\log p_t\bigl(\rvx_t^i \mid \rvx_0^j\bigr) + \lambda\, r\bigl(\rvx_0^j,\rvc\bigr)\Bigr),
\label{eq:finite_lambda_exact}
\end{align}
where $\rvx_0^j\sim p_\theta(\rvx_0\mid\rvc)$ and $p_t(\rvx_t\mid\rvx_0)$ is the tractable Gaussian forward kernel; the finite-$\lambda$ estimator is thus computable in principle when exact proposal samples are available. Obtaining such samples, however, requires full reverse-process rollouts. To construct a rollout-free relaxation, we replace the exact samples with the one-step Tweedie estimates already available from the current particle population, $\hat{\rvx}_0^j = \mathbb{E}_{p_\theta(\rvx_0\mid\rvx_t^j,\rvc)}[\rvx_0]$, which gives the soft weights
\begin{align}
\tilde\alpha_{ij}^{(\lambda)}
= \mathrm{softmax}_j\Bigl(\log p_t\bigl(\rvx_t^i \mid \hat{\rvx}_0^j\bigr) + \lambda\, r\bigl(\hat{\rvx}_0^j,\rvc\bigr)\Bigr),
\label{eq:finite_lambda_soft}
\end{align}
and the corresponding particle-specific relaxed guidance target
\begin{align}
\hat m^{r}_{N,\lambda}(\rvx_t^i) = \sum_{j=1}^{N}\tilde\alpha_{ij}^{(\lambda)}\,\hat{\rvx}_0^j.
\label{eq:finite_lambda_target}
\end{align}
Rather than selecting only the highest-reward candidate, this target softly combines the candidate set according to both the forward compatibility with the query particle and the reward. Smaller values of $\lambda$ distribute the guidance across multiple candidates, while larger values increasingly concentrate the weights on the highest-reward candidate; assuming a unique reward maximizer, the hard-selection BoNG update is recovered as $\lambda\to\infty$, consistent with \cref{theorem}.

\paragraph{Approximation gaps.}
This construction remains approximate in two respects. First, the one-step Tweedie estimates are deterministic posterior means rather than exact proposal samples from $p_\theta(\rvx_0\mid\rvc)$. Second, once particle interaction has been applied, the guided particles need not exactly follow the original marginal $p_\theta(\rvx_t\mid\rvc)$. Consequently, the exact importance-sampling interpretation of \cref{lemma1} does not directly transfer to this rollout-free relaxation: BoNG admits a rollout-free finite-$\lambda$ relaxation, rather than exactly targeting the finite-$\lambda$ reward-tilted distribution.

\paragraph{Empirical verification of the predicted scaling.}
The finite-$\lambda$ analysis predicts that $\lambda$ traces the reward--diversity trade-off, with hard-selection BoNG recovered at the large-$\lambda$ end of the spectrum. To verify this, we implement the soft-weighted relaxation, replacing hard Best-of-$N$ selection with the softmax weights $\tilde\alpha_{ij}^{(\lambda)}$ over the one-step denoised candidates, with the same guidance window as the main experiments (SD v1.5, DDPM 100 steps). \cref{tab:finite_lambda} reports the mean ImageReward and diversity as $\lambda$ varies. Increasing $\lambda$ raises the mean reward while diversity decreases gradually, which is precisely the behavior the finite-$\lambda$ analysis predicts. These results suggest a soft-weighted family of BoNG variants beyond the hard-selection algorithm evaluated in this paper, which we view as a natural direction for future work opened up by our framework.

\begin{table}[H]
\small
\centering
\setlength{\tabcolsep}{5pt}
\caption{Soft-weighted finite-$\lambda$ relaxation of BoNG under varying $\lambda$ ($N=4$).}
\label{tab:finite_lambda}
\vspace{1.5mm}
\begin{tabular}{clcc}
\toprule
\textbf{Backbone} & \textbf{Method} & \textbf{Mean IR} $(\uparrow)$ & \textbf{Diversity} $(\uparrow)$ \\
\midrule
\multirow{7}{*}{\shortstack{SD v1.5 (0.9B)\\(w/ DDPM 100 steps)}}
& Vanilla & $-0.001$ & 0.313 \\
\cmidrule(l){2-4}
& $\lambda=100$ & 0.001 & 0.313 \\
& $\lambda=500$ & 0.004 & 0.311 \\
& $\lambda=1000$ & 0.131 & 0.300 \\
& $\lambda=5000$ & 0.564 & 0.234 \\
& $\lambda=10000$ & 0.595 & 0.214 \\
\bottomrule
\end{tabular}
\end{table}


\section{Additional Experiment Results}
\label{app:add_exp}
This section supplements the main results along both evaluation axes. For the multi-output setting of \cref{Table2}, \cref{app:reproduced_table2} provides a unified-codebase reproduction with the SMC entry and diversity added, and \cref{app:smc_bong} shows that BoNG composes with SMC. For the single-output setting of \cref{Table1}, \cref{app:lidar_matched} extends the matched wall-clock comparison against LiDAR in \cref{Sec:Analysis} to a large sample budget, and \cref{app:hpsv3} tests a heavier guidance reward beyond the HPSv2 result in \cref{tab:hpsv2}.

\subsection{Unified-Codebase Reproduction of Table 2}
\label{app:reproduced_table2}
For transparency, we reproduce all baselines of \cref{Table2} in our unified codebase, following the LiDAR paper's configurations for the UG and DATE baselines, and additionally report the SMC entry and the diversity of each method. \cref{tab:reproduced_table2} reports the results. The relative ordering of \cref{Table2} is preserved under reproduction. SMC attains a higher mean reward than BoNG in this table, but at a substantially lower diversity (0.155 vs.\ 0.224), since resampling duplicates high-scoring ancestors; this makes it unsuitable for the multi-output setting that \cref{Table2} targets, where $N$ distinct outputs must be served.

\begin{table}[H]
\small
\centering
\setlength{\tabcolsep}{4pt}
\caption{Unified-codebase reproduction of \cref{Table2} with the SMC entry and diversity added.}
\vspace{1.5mm}
\label{tab:reproduced_table2}
\begin{tabular}{lcccccc}
\toprule
\textbf{Method} & \textbf{IR} $(\uparrow)$ & \textbf{CLIP} $(\uparrow)$ & \textbf{GenEval} $(\uparrow)$ & \textbf{Diversity} $(\uparrow)$ & \textbf{Time}{\scriptsize~(s)} $(\downarrow)$ & \textbf{Mem.}{\scriptsize~(GiB)} $(\downarrow)$ \\
\midrule
Vanilla & $-0.001$ & 0.271 & 0.426 & 0.313 & 7.07 & \textbf{8.9} \\
\midrule
UG & 0.271 & 0.261 & 0.364 & \textbf{0.461} & 57.89 & 28.2 \\
DATE & 0.382 & 0.275 & 0.450 & 0.327 & 32.68 & 24.7 \\
LiDAR ($\delta=5$, $n=50$) & 0.394 & 0.278 & 0.473 & 0.280 & 13.57 & \textbf{8.9} \\
SMC & \textbf{0.641} & \textbf{0.283} & \textbf{0.517} & 0.155 & 8.23 & \textbf{8.9} \\
\rowcolor{gray!25}BoNG & 0.506 & 0.280 & 0.511 & 0.224 & \textbf{8.06} & \textbf{8.9} \\
\bottomrule
\end{tabular}
\end{table}

\subsection{Comparison with LiDAR under Matched Wall-Clock Budgets}
\label{app:lidar_matched}
\cref{tab:lidar_matched} in \cref{Sec:Analysis} compares BoNG with LiDAR under matched end-to-end wall-clock time at $N=4$. Here we extend the comparison to a large sample budget. At $N=32$, we set LiDAR to $n=100$ and $\delta=5$ so that its additional cost $n\delta+n(D+R)$ consumes similar time as the $NK(D+R)$ cost of BoNG (see \cref{app:cost}), and compare the average performance over the $N$ samples in \cref{tab:large_n_budget}. BoNG outperforms LiDAR on every metric with a shorter end-to-end time. This is because LiDAR's performance gain is independent of $N$ and relies on $n$, whereas BoNG inherits the scaling behavior of Best-of-$N$ sampling as $N$ increases.

\begin{table}[H]
\small
\centering
\setlength{\tabcolsep}{5pt}
\caption{Budget-matched comparison with LiDAR at $N=32$. \textbf{Bold} entries indicate the best results.}
\label{tab:large_n_budget}
\vspace{1.5mm}
\begin{tabular}{clcccc}
\toprule
& & \multicolumn{3}{c}{\textbf{Performance} $(\uparrow)$} & \textbf{Cost} $(\downarrow)$ \\
\cmidrule(lr){3-5}\cmidrule(l){6-6}
\textbf{Backbone} & \textbf{Method} & IR & CLIP & GenEval & Time{\scriptsize~(s)} \\
\midrule
\multirow{4}{*}{\shortstack{SD v1.5 (0.9B)\\(w/ DDPM 100 steps)}}
& Vanilla & $-0.012$ & 0.272 & 0.425 & 49.5 \\
\cmidrule(l){2-6}
& LiDAR ($n=100$, $\delta=5$) & 0.405 & 0.281 & 0.478 & 61.4 \\
& \cellcolor{gray!25}BoNG & \cellcolor{gray!25}\textbf{0.854} & \cellcolor{gray!25}\textbf{0.285} & \cellcolor{gray!25}\textbf{0.538} & \cellcolor{gray!25}\textbf{56.1} \\
\bottomrule
\end{tabular}
\end{table}

\subsection{Performance with a Heavier Reward Model}
\label{app:hpsv3}
\cref{tab:hpsv2} in \cref{Sec:Analysis} shows that BoNG transfers to HPSv2~\cite{wu2023human} as the guidance reward. To further test whether the guidance mechanism holds under a heavier reward backbone, we run Vanilla, SMC, and BoNG with HPSv3~\cite{ma2025hpsv3widespectrumhumanpreference} as the guidance reward ($N=4$, SD v1.5, DDPM 100 steps) and report the BoN performance in \cref{tab:hpsv3}. BoNG improves the HPSv3 reward over Vanilla, confirming that the guidance mechanism transfers to a heavier reward backbone, while SMC attains a slightly higher HPSv3 score in this setting. We also observe that ImageReward decreases under HPSv3 guidance for both SMC and BoNG; we attribute this to HPSv3 using a non-CLIP encoder, so that the two reward models emphasize different features.

\begin{table}[H]
\small
\centering
\setlength{\tabcolsep}{6pt}
\caption{Guidance with HPSv3 as the reward model ($N=4$; BoN performance). \textbf{Bold} entries indicate the best results.}
\label{tab:hpsv3}
\begin{tabular}{clcccc}
\toprule
\textbf{Backbone} & \textbf{Method} & \textbf{HPSv3} $(\uparrow)$ & \textbf{IR} $(\uparrow)$ & \textbf{CLIP} $(\uparrow)$ & \textbf{GenEval} $(\uparrow)$ \\
\midrule
\multirow{4}{*}{\shortstack{SD v1.5 (0.9B)\\(w/ DDPM 100 steps)}}
& Vanilla & 7.287 & \textbf{0.440} & \textbf{0.281} & \textbf{0.520} \\
\cmidrule(l){2-6}
& SMC & \textbf{7.418} & 0.439 & 0.280 & \textbf{0.520} \\
& \cellcolor{gray!25}BoNG & \cellcolor{gray!25}7.306 & \cellcolor{gray!25}0.399 & \cellcolor{gray!25}0.279 & \cellcolor{gray!25}0.512 \\
\bottomrule
\end{tabular}
\end{table}

\subsection{Composing BoNG with SMC}
\label{app:smc_bong}
Inference-time steering methods fall into two distinct classes~\citep{singhal2025a}: reward-guidance methods, which steer particles by modifying the sampling dynamics, and particle-based resampling methods, which steer by reweighting and resampling intermediate particles with potentials. BoNG shares a structural connection with SMC, as both maintain $N$ interacting particles and evaluate intermediate rewards on denoised estimates, but the two belong to different classes. BoNG is a reward-guidance method, in which the reward signal enters through the guidance term $\frac{s}{\sigma_t^2}\bigl(\hat{\rvx}_0^{\mathrm{BoN}}-\hat{\rvx}_0^i\bigr)$ added to the score function, whereas SMC is a potential-based resampling method, in which the reward signal enters through the resampling weights $G_t$ while the transition kernel is left unchanged. Concretely, while the outer loop of BoNG resembles that of SMC (propagate $N$ particles, score their denoised estimates with the reward), SMC applies multinomial resampling based on $G_t$, so particles are copied and deleted, whereas BoNG applies no resampling and instead modifies the score function. In this sense, BoNG brings the strength of the guidance class, with no particle copying or deletion, to the black-box reward setting.

This distinction has a direct practical consequence: since guidance and resampling act on different components of the particle system (the transition kernel and the resampling weights, respectively), they are orthogonal and composable. To verify this empirically, we apply BoNG on top of SMC and report ImageReward with $N$ particles in \cref{tab:smc_bong}, for both the mean over the $N$ returned samples and the Best-of-$N$ sample. Composing BoNG with SMC improves both the mean and the Best-of-$N$ reward at every particle count, with no reward gradients and no additional score-network evaluations. This empirically confirms that BoNG occupies the proposal-generator slot of the FK-IPS framework~\citep{singhal2025a} and complements resampling-based steering.

\begin{table}[H]
\small
\centering
\setlength{\tabcolsep}{6pt}
\caption{Composing BoNG with SMC. \textbf{Bold} entries indicate the best results.}
\label{tab:smc_bong}
\vspace{1.5mm}
\begin{tabular}{clcccc}
\toprule
& & \multicolumn{2}{c}{\textbf{Mean IR} $(\uparrow)$} & \multicolumn{2}{c}{\textbf{BoN IR} $(\uparrow)$} \\
\cmidrule(lr){3-4}\cmidrule(l){5-6}
\textbf{Backbone} & \textbf{Particle Count} & SMC & +BoNG & SMC & +BoNG \\
\midrule
\multirow{3}{*}{\shortstack{SD v1.5 (0.9B)\\(w/ DDPM 100 steps)}}
& $N=2$ & $-0.039$ & \textbf{0.276} & 0.377 & \textbf{0.417} \\
& $N=3$ & 0.463 & \textbf{0.549} & 0.679 & \textbf{0.684} \\
& $N=4$ & 0.641 & \textbf{0.706} & 0.823 & \textbf{0.833} \\
\bottomrule
\end{tabular}
\end{table}

\section{Qualitative Comparison}
\label{app:qualitative}
\cref{fig:bon_samples_compare} shows the Best-of-$N$ samples of Vanilla, SMC, and BoNG for various prompts under the configurations of \cref{Table1}. BoNG produces samples that are better aligned with the prompts across backbones and samplers.

\begin{figure}[H]
    \centering
    \includegraphics[width=\textwidth]{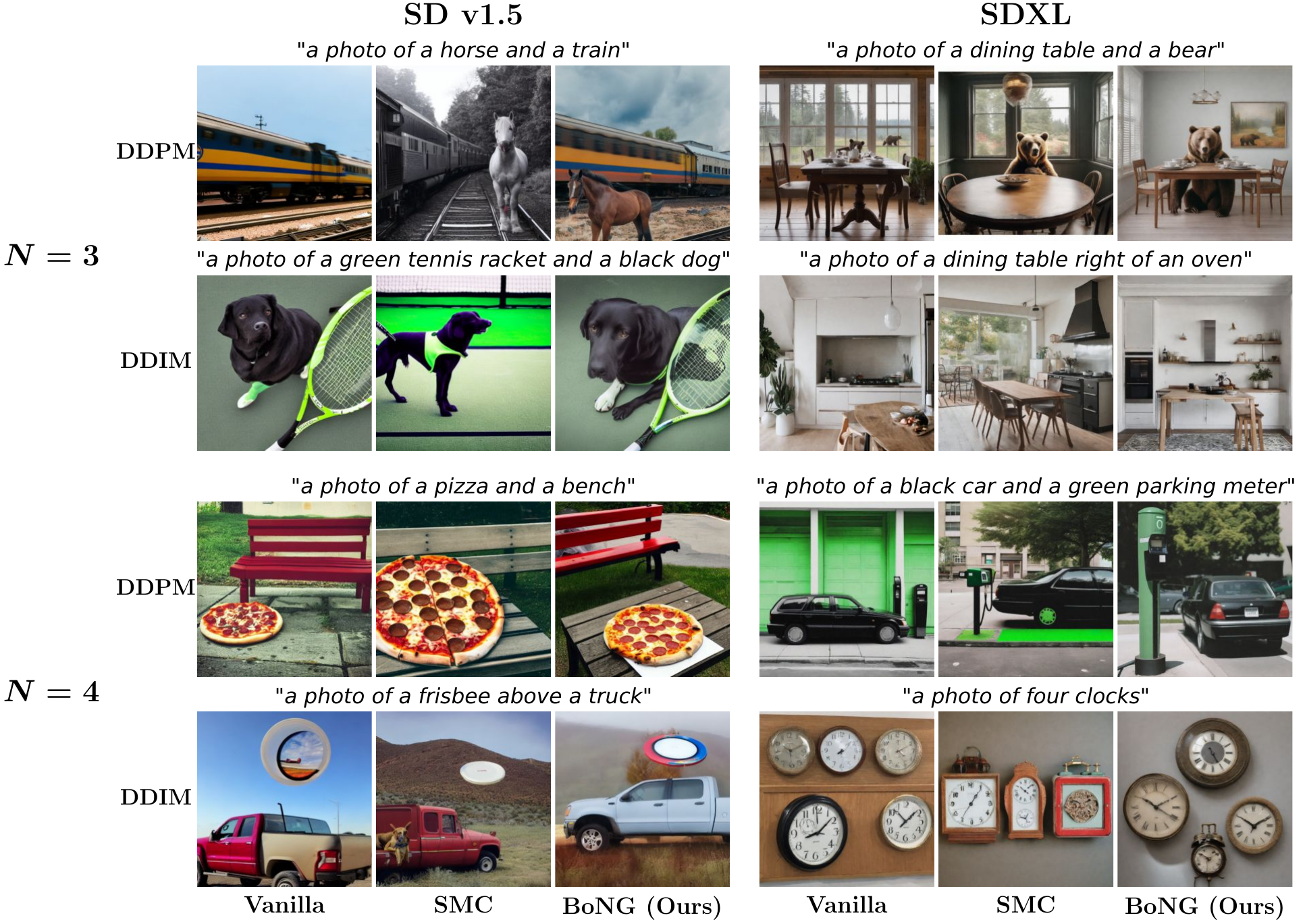}
    \vspace{-2mm}
    \caption{Qualitative comparison of Best-of-$N$ samples from Vanilla, SMC, and BoNG.}
    \label{fig:bon_samples_compare}
\end{figure}

\section{Pseudocode of BoNG}
\label{app:pseudocode}

\cref{alg:bong_full} presents the full sampling procedure of BoNG, combining the Best-of-$N$ selection of \cref{alg:bon-select} and the guided update of \cref{alg:bong-update} into a single loop over the reverse diffusion process.

\begin{algorithm}[H]
\caption{BoNG: full sampling procedure}
\label{alg:bong_full}
\begin{algorithmic}[1]
\STATE \textbf{Input:} prompt $\rvc$, number of particles $N$, diffusion steps $T$, guidance scale $s$, guidance window $[t_{\mathrm{start}}, t_{\mathrm{end}}]$, reward $r$, pretrained noise network $\epsilon_\theta$
\STATE Initialize $\rvx_T^i \sim \mathcal{N}(\mathbf{0}, \mathbf{I})$ for $i=1,\dots,N$
\FOR{$t = T, \dots, 1$}
    \STATE $\bm{\epsilon}^i \leftarrow \epsilon_\theta(\rvx_t^i, t, \rvc)$ for $i=1,\dots,N$
    \IF{$t_{\mathrm{start}} \le t < t_{\mathrm{end}}$}
        \STATE $\hat{\rvx}_0^i \leftarrow \bigl(\rvx_t^i - \sigma_t \bm{\epsilon}^i\bigr)/\sqrt{\bar\alpha_t}$ for $i=1,\dots,N$ \hfill $\triangleright$ Alg.~\ref{alg:bon-select}
        \STATE $R^i \leftarrow r\bigl(\mathrm{Decode}(\hat{\rvx}_0^i), \rvc\bigr)$ for $i=1,\dots,N$
        \STATE $i^* \leftarrow \arg\max_i R^i$, \quad $\hat{\rvx}_0^{\mathrm{BoN}} \leftarrow \hat{\rvx}_0^{i^*}$
        \STATE $\bm{\epsilon}^i \leftarrow \bm{\epsilon}^i - \sqrt{\bar\alpha_t}\, s \,\bigl(\hat{\rvx}_0^{\mathrm{BoN}} - \hat{\rvx}_0^i\bigr)/\sigma_t$ for $i=1,\dots,N$ \hfill $\triangleright$ Alg.~\ref{alg:bong-update}
    \ENDIF
    \STATE $\rvx_{t-1}^i \leftarrow \mathrm{SchedulerStep}(\bm{\epsilon}^i, \rvx_t^i, t)$ for $i=1,\dots,N$
\ENDFOR
\STATE \textbf{Output:} $\{\rvx_0^i\}_{i=1}^N$
\end{algorithmic}
\end{algorithm}

\end{document}

%% file: math_commands.tex
\usepackage{amsmath,amsfonts,bm}

\def\eqref#1{equation~\ref{#1}}

\def\1{\bm{1}}

\def\rvc{{\mathbf{c}}}

\def\rvx{{\mathbf{x}}}

\DeclareMathAlphabet{\mathsfit}{\encodingdefault}{\sfdefault}{m}{sl}
\SetMathAlphabet{\mathsfit}{bold}{\encodingdefault}{\sfdefault}{bx}{n}

